\def\isarxiv{1} 
\DeclareSymbolFont{extraup}{U}{zavm}{m}{n}
\DeclareMathSymbol{\varheart}{\mathalpha}{extraup}{86}
\DeclareMathSymbol{\vardiamond}{\mathalpha}{extraup}{87}
\renewcommand*{\citet}{\cite} 
\renewcommand*{\citep}{\cite}
\theoremstyle{plain}
\newtheorem{theorem}{Theorem}[section]
\newtheorem{lemma}[theorem]{Lemma}
\newtheorem{definition}[theorem]{Definition}
\newtheorem{remark}[theorem]{Remark}
\newcommand{\R}{\mathbb{R}}
\newcommand{\0}{\mathbf{0}}
\newcommand{\1}{\mathbf{1}} 
\newcommand{\flag}{\mathrm{flag}}
\newcommand{\EOF}{\mathrm{EOF}}
\newcommand{\mem}{\texttt{mem}}
\newcommand{\target}{\mathrm{target}}
\definecolor{lightblue}{HTML}{BCF2F6}
\definecolor{lightgreen}{HTML}{C0EBA6}
\definecolor{lightred}{HTML}{FFAAAA}
\definecolor{lightpurple}{HTML}{E4B1F0}
\definecolor{lightyellow}{HTML}{FFF100}
\definecolor{olive}{rgb}{0.69, 0.61, 0.85}
\definecolor{darkblue}{HTML}{024CAA}
\definecolor{darkgreen}{HTML}{185519}
\definecolor{darkred}{HTML}{C62E2E}
\newcommand{\bluecell}{\cellcolor{lightblue}}
\newcommand{\greencell}{\cellcolor{lightgreen}}
\newcommand{\redcell}{\cellcolor{lightred}}
\DeclareMathOperator{\poly}{poly}
\newcommand{\ReLU}{\mathsf{ReLU}} 
\newcommand{\SUBLEQ}{\texttt{SUBLEQ}}
\newcommand{\ReLUMLP}{\textsf{ReLU}-\textsf{MLP}~} 
\newcommand\tikznode[3][]%
\newcommand*{\RN}[1]{\expandafter\@slowromancap\romannumeral #1@}
\begin{document}

\ifdefined\isarxiv

\date{}








\title{
Looped ReLU MLPs May Be All You Need as Practical Programmable Computers
}
\author{
Yingyu Liang\thanks{\texttt{
yingyul@hku.hk}. The University of Hong Kong. \texttt{
yliang@cs.wisc.edu}. University of Wisconsin-Madison.} 
\and
Zhizhou Sha\thanks{\texttt{ shazz20@mails.tsinghua.edu.cn}. Tsinghua University.}
\and
Zhenmei Shi\thanks{\texttt{
zhmeishi@cs.wisc.edu}. University of Wisconsin-Madison.}
\and 
Zhao Song\thanks{\texttt{ magic.linuxkde@gmail.com}. The Simons Institute for the Theory of Computing at the UC, Berkeley.}
\and 
Yufa Zhou\thanks{\texttt{ yufazhou@seas.upenn.edu}. University of Pennsylvania.}
}

\else

\runningtitle{Looped ReLU MLPs May Be All You Need as Practical Programmable Computers}

\twocolumn[

\aistatstitle{
Looped ReLU MLPs May Be All You Need as Practical Programmable Computers
}


\aistatsauthor{ 
Yingyu Liang$^{\vardiamond,\varheart}$
\And 
Zhizhou Sha$^{\clubsuit}$
\And  
Zhenmei Shi$^{\vardiamond}$
\And 
Zhao Song$^{\spadesuit}$
\And 
Yufa Zhou$^\heartsuit$
}


\aistatsaddress{ 
$^\vardiamond$University of Wisconsin-Madison. 
\qquad
$^\varheart$The University of Hong Kong. 
\\
$^\clubsuit$Tsinghua University. 
\qquad
$^\heartsuit$University of Pennsylvania. 
\\
$^\spadesuit$The Simons Institute for the Theory of Computing at the University of California, Berkeley. 
} ]


\fi

\ifdefined\isarxiv
\begin{titlepage}
  \maketitle
  \begin{abstract}
Previous work has demonstrated that attention mechanisms are Turing complete. More recently, it has been shown that a looped 9-layer Transformer can function as a universal programmable computer. In contrast, the multi-layer perceptrons with $\mathsf{ReLU}$ activation ($\mathsf{ReLU}$-$\mathsf{MLP}$), one of the most fundamental components of neural networks, is known to be expressive; specifically, a two-layer neural network is a universal approximator given an exponentially large number of hidden neurons. However, it remains unclear whether a $\mathsf{ReLU}$-$\mathsf{MLP}$ can be made into a universal programmable computer using a practical number of weights. In this work, we provide an affirmative answer that a looped 23-layer $\mathsf{ReLU}$-$\mathsf{MLP}$ is capable of performing the basic necessary operations, more efficiently and effectively functioning as a programmable computer than a looped Transformer. This indicates simple modules have stronger expressive power than previously expected and have not been fully explored. Our work provides insights into the mechanisms of neural networks and demonstrates that complex tasks, such as functioning as a programmable computer, do not necessarily require advanced architectures like Transformers.

  \end{abstract}
  \thispagestyle{empty}
\end{titlepage}

{\hypersetup{linkcolor=black}
\tableofcontents
}
\newpage

\else

\begin{abstract}

\end{abstract}

\fi

\section{INTRODUCTION}

Transformers \citep{vsp+17} have demonstrated their potential across a variety of tasks, emerging as a dominant choice for a wide spectrum of practical applications, including natural language processing (NLP) \citep{dclt19, rsr+20, gpt4turbo, phi3, claude3.5, o1, llama3} and computer vision \citep{dbk+20, px23, hwc+22, adh+21, knh+22, wsd+23, wcz+23, wxz+24,lssz24_gm, wms+24}, among others. The success of Transformers is largely attributed to their ability to perform complex operations, such as induction head \citep{oen+22, cs24}, in-context learning \citep{dld+22, wbz+21, wtb+22, xsl24, swxl24, cll+25_icl}, information retrieval~\cite{smn+24} and chain of thoughts \citep{wws+22,kgr+22}. Meanwhile, another line of research explores the theoretical capability of Transformers. For instance, \citet{pbm21} has proven the Turing completeness of the attention mechanism.
However, Turing completeness is not a feature unique to Transformers. \citet{ss92,cs21} has demonstrated that Recurrent Neural Networks (\textsf{RNN}s) are also Turing complete.   
Other works \citep{p99, kl20} have shown that the most basic module in deep learning, the Multi-Layer Perceptron (\textsf{MLP}), is a universal approximator. 

However, the concept of Turing completeness inherently necessitates infinite memory, an impracticality in real-world scenarios due to the finite nature of available memory. 
In \cite{grs+23}, they bridge the gap between the theoretical Turing machine and the practical Transformer-based programmable computer by illustrating that a looped 9-layer Transformer possesses the necessary expressiveness to operate as a programmable computer. 
Given that \ReLUMLP also has the capability for achieving Turing completeness and itself is a universal approximator, this raises an interesting question:
\begin{center}
    \textit{Is \ReLUMLP expressive enough to be a practical programmable computer?}
\end{center}

To the best of our knowledge, no practical solution for constructing a \ReLUMLP as a general-purpose computer has been proposed before. Therefore, we explore the conditions under which a \ReLUMLP can function as a universal programmable computer and prove that a $23$-layer looped \ReLUMLP is capable of emulating a general-purpose computer.
Our main approach involves constructing a $23$-layer \ReLUMLP to demonstrate that the minimalistic SUBLEQ instruction can be implemented, thereby showing that the computational power of a $23$-layer \ReLUMLP is comparable to that of a programmable computer. This idea was first introduced by \cite{grs+23}, but our key contribution lies in using a simpler \ReLUMLP architecture, in contrast to the more complex Transformer architecture employed in \cite{grs+23}. 

Our research centers on $\mathsf{MLP}$s equipped with the $\mathsf{ReLU}$ activation function, which is fundamental to their design. We commence by formally defining the $\mathsf{ReLU}$ activation function as follows:
\begin{definition}[$\ReLU$] \label{def:relu}
We define \textsf{ReLU} as follows:
For a vector $x \in \R^n$, the output of the $i$-th entry of \textsf{ReLU} for $i \in \{1,2, \dots, n\}$ is 
$\ReLU(x)_i = \max \{x_i,0\}.$
\end{definition}
Then, we introduce the definition of $\mathsf{MLP}$ with $\ReLU$ activation as follows:
\begin{definition}[\textsf{ReLU}-\textsf{MLP}] \label{def:relu_mlp}
Let $n$ be the size of the state.
Let input be $x \in \R^n$. The weights and biases are $W \in \R^{n \times n}, b \in \R^n$.
We have the $1$-layer output of \textsf{ReLU} Multiple-Layer Perceptron (\textsf{ReLU-MLP}) as
\begin{align} \label{eq:relu_mlp}
    \ReLU(W x + b) \in \R^n.
\end{align}
The $m$-layer \ReLUMLP is then the compositions of $m$ such Perceptrons in Eq.~\eqref{eq:relu_mlp}, e.g., $\ReLU(W_2 \cdot  \ReLU(W_1 x + b_1) + b_2)$ for $2$-layer \textsf{ReLU-MLP}.
\end{definition}
Based on this setting, we have demonstrated that a $23$-layer looped \ReLUMLP is capable of emulating a general-purpose computer.
This is accomplished by constructing a $23$-layer \ReLUMLP that can execute a generalized form of the single instruction \SUBLEQ~(see also Algorithm~\ref{alg:subleq}). The \SUBLEQ~instruction, which stands for Subtract and Branch if Less-than or Equal to zero, operates on three addresses $a, b, c \in \R^{\log n}$. It subtracts the value at memory location $\mem[b]$ from the value at memory location $\mem[a]$, stores the result back at $\mem[b]$, and if the result is less than or equal to zero, the program execution continues at the address specified by $c$. Otherwise, it will execute the next instruction without branching. 
Despite the instruction's simplicity, it is powerful enough to be the foundation of a universal computing system \citep{mp88, subleq}. Consequently, we have established that our \ReLUMLP design constitutes a functional One Instruction Set Computer (OISC).

In contrast to previous research \citep{grs+23}, which utilized Transformers as the fundamental building block to create a universal computer, our approach harnesses the simple \ReLUMLP to accomplish the same objective.
Furthermore, for each forward pass, our looped \textsf{ReLU-MLP} takes $O(n \log n)$ time complexity, while looped Transformer takes $O(n^2)$ (Section~\ref{sec:discussion:comparision_with_attention}).
These findings suggest that basic \ReLUMLP modules are sufficiently expressive, and their potential is underexplored. With careful design, they might exhibit emergent abilities like in-context learning, indicating that complex architectures like Transformers may not always be necessary for certain computational tasks. Understanding the fundamental capabilities of neural networks before adopting complex models could lead to more efficient, less resource-intensive solutions. Challenging the belief that only advanced models can handle complex tasks opens avenues for research, prioritizing simplicity and efficiency without sacrificing performance. In a world of finite computational resources, this discovery could lead to more sustainable and accessible AI solutions.

To sum up, we conclude our contributions as follows:
\begin{itemize}
    \item To the best of our knowledge, we are the first work to prove that a looped $23$-layer \textsf{ReLU-MLP} satisfies the conditions required to function as programmable computers (Theorem~\ref{thm:looped_relu_mlp_as_programmable_computer:informal}). 
    \item For each forward pass, our looped \textsf{ReLU-MLP} takes $O(n \log n)$ time complexity, while looped Transformer takes $O(n^2)$ (Section~\ref{sec:discussion:comparision_with_attention}), highlighting the importance of understanding the capabilities of the fundamental \ReLUMLP component and demonstrating its powerful expressivity.
    \item Our findings show that traditional neural networks, such as \textsf{ReLU-MLP}, have not been fully explored, challenging the belief that only advanced architectures can perform complex tasks. 
\end{itemize}

\paragraph{Roadmap.}
The paper is organized as follows: In Section~\ref{sec:related}, we discuss related literature.
In Section~\ref{sec:preli}, we provide our notation system and key concepts and definitions. 
In Section~\ref{sec:main}, we introduce our main result that a looped $23$-layer \ReLUMLP can emulate a programmable computer.
In Section~\ref{sec:key_func}, we demonstrate how to implement basic operations such as read, write, conditional branching, and \SUBLEQ~using \textsf{ReLU-MLP}.
In Section~\ref{sec:discuss}, we discuss the high-level intuition and potential future directions of our findings.
In Section~\ref{sec:conclusion}, we conclude our paper. 
\section{RELATED WORK}\label{sec:related}

\paragraph{Complexity and Neural Networks.}

Circuit complexity, a branch of computational complexity theory, studies circuit families as models of computation\footnote{We refer the reader to the chapter $6$ and $14$ of \citet{ab09} or chapter $1$ and $2$ of \textit{Handbook of Theoretical Computer Science} \citep{b14, j90} for more detailed background of circuit complexity.}. Several circuit complexity classes are significant in machine learning. Specifically, $\mathsf{AC}^0$ represents problems highly parallelizable with standard logic gates, while $\mathsf{TC}^0$ extends this to include \textit{threshold gates}, and $\mathsf{NC}^1$ denotes the language recognizable by $O(\log n)$-depth circuits with bounded gate arity \citep{mss22}. It is known that $\mathsf{AC}^0 \subset \mathsf{TC}^0 \subseteq \mathsf{NC}^1$, but whether $\mathsf{TC}^0 \neq \mathsf{NC}^1$ remains an open question. Assuming this inequality, \citet{lag+22} shows that Transformer depth must depend on input sequence length when simulating non-solvable semiautomata.
\citet{llzm24} explore relationships among constant-depth Transformers, Transformers with Chain-of-Thought (CoT), and circuit complexity. They demonstrate: 
$\mathsf{T}[\poly (n), 1, 1] \subseteq  \mathsf{CoT}[\log n, \poly(n), 1, 1] \subseteq \mathsf{AC}^0$ and 
$\mathsf{T}[\poly(n), \log n, 0] \subseteq  \mathsf{CoT}[\log n, \poly(n), \log n, 0] 
    \subseteq  \mathsf{TC}^0$
where $\mathsf{T}[d(n), s(n), e(n)]$ denotes a constant-depth Transformers with embedding size $d(n)$, precision $s(n)$ bits, and exponent bits $e(n)$ for input length $n$ and $\mathsf{CoT}[T(n), d(n), s(n), e(n)]$ denotes a $T(n)$-step CoT of a constant-depth Transformer $\mathsf{T}[d(n), s(n), e(n)]$. It provides theoretical insights into the emergent CoT ability of Transformers, showing that intermediate reasoning steps enable tackling more complex problems.

The Strong Exponential Time Hypothesis ({\sf SETH}), introduced by \citet{ip01}, strengthens the $\mathsf{P} \neq \mathsf{NP}$ conjecture by asserting that current best $\mathsf{SAT}$ algorithms are roughly optimal: for every $\epsilon > 0$, there exists $k \geq 3$ such that $k$-$\mathsf{SAT}$ cannot be solved in $O(2^{(1-\epsilon)n})$ time, even randomly. {\sf SETH} is widely used to prove fine-grained lower bounds for various algorithmic problems~\citep{w18} and has been applied to derive lower bounds for Transformer training/inference~\citep{as23, as24_neurips,lss+24} and tensor attention~\citep{as24_iclr, lssz24_tat}. 
Specifically, \citet{as23} demonstrates that unless the $\mathsf{SETH}$ fails, no algorithm exists that can compute the forward pass of an attention network in truly subquadratic time. On the other hand, \citet{as24_neurips} establishes that the same condition applies to the backward computation of attention networks, i.e., unless the $\mathsf{SETH}$ fails, no truly-subquadratic time algorithm can be devised for the backward computation of attention networks.
In essence, complexity theory provides a powerful framework for investigating neural networks' computational capabilities by rigorously analyzing the computational problems they can efficiently solve.

\paragraph{Turing Completeness of Neural Networks.}

In recent years, neural networks ($\mathsf{NN}$s) have demonstrated great potential in performing tasks that were previously considered impossible for traditional numerical approximation methods. This remarkable capability is largely attributed to their properties as universal approximators \citep{p99,ybr+19,kl20,cll+25_var,hwl+24} and, in some cases, their Turing completeness \citep{ss92,pmb19,dgv+19,cs21,pbm21,smg24}. Specifically, \citet{pmb19,pbm21} show that Transformers with attention mechanism under infinite precision are Turing complete, whereas \citet{dgv+19} demonstrates that this is not the case under fixed precision. 
Another line of work \citep{p87,ss92,i95,ks96,cs21,smg24} focuses on recurrent neural networks ($\mathsf{RNN}$s) and proves their Turing completeness. 
Moreover, \citet{wcm22} demonstrates that \ReLUMLP can meaningfully approximate Boolean circuits, and Transformers can meaningfully approximate Turing machines. It is important to note that Turing completeness deals with discrete computations, such as processing language, whereas universal approximation focuses on continuous functions. 
\citet{swl21,swl24} show that \ReLUMLP is expressive and over fixed feature methods like kernels.
Thus, one property does not imply the other \citep{ybr+19}, and it is necessary to study these two subjects separately.

\paragraph{Limitations of Transformers.}
Transformers have demonstrated remarkable capability in natural language processing tasks, yet their proficiency in mathematical computations remains a concern \citep{c22}. Therefore, research has been directed toward delineating the computational limits of Transformers when faced with mathematical tasks. \citet{ms23} has shown that if $\mathsf{L} \neq \mathsf{P}$ \footnote{The class $\mathsf{L}$ represents the set of problems that can be resolved using logarithmic space, whereas $\mathsf{P}$ denotes the class of problems that can be solved within polynomial time constraints.} (i.e. not all polynomial-time problems are solvable in logarithmic space), Transformers are incapable of accurately resolving linear inequalities or determining membership in an arbitrary context-free grammar that includes empty productions, and \citet{fzg+24} illustrates that unless $\mathsf{TC}^0 = \mathsf{NC}^1$, there is no log-precision Transformers is capable to solve arithmetic and equation-solving problems. \cite{lls+25_graph,kll+25_var,kll+25_tc,kls+25,cll+25_mamba,lls+24_tensor,lll+24,cll+24_rope,hlsl24,hsk+24,hwg+24} show the limitation of Transformers by circuit complexity or some other frameworks.

\paragraph{Neural Networks Can Perform Algorithms.}
Given their Turing completeness, it is not surprising that neural networks ($\mathsf{NN}$s) can perform algorithms once properly trained. One example is their ability for in-context learning (ICL) \citep{oen+22, mlh+22, xsl24, swxl24, gsx23}, where Transformers produce the correct output based on the context provided by examples without adapting their parameters. Studies have shown that ICL can implement optimization algorithms like gradient descent across layers \citep{vkr+23, asa+23,acds24, mhm23, gsr+24}, and interestingly, Transformers can in-context fine-tune smaller Transformers \citep{pmxa24}.
Moreover, \citet{llzm24} shows that when equipped with enough steps of Chain-of-Thought (CoT) reasoning \citep{wws+22, kgr+22}, constant-depth Transformers using constant-bit precision and embedding size can solve any problem solvable by Boolean circuits. 
Other studies \citep{zpga23, al23, lll+25_loop, lls+25_grok} observe that Transformers perform dynamic programming to generate.
Transformers have also been shown to efficiently learn arithmetic operations such as addition, multiplication, and elementary functions like square roots, and can even simulate a programmable computer \citep{lsl+23, grs+23}. 
Additionally, \citet{hs24} proves that \ReLUMLP can solve exact max-flow problems.

\paragraph{Neural Networks as Practical Programmable Computer.}

The programmable computer is known as a powerful and controllable computing architecture. Numerous studies strive to establish the equivalence of their proposed neural network architectures with the programmable computer, thereby illustrating the efficacy of their designs. \citet{grs+23} employs Transformers as the building block to build a programmable computer, showcasing the latent capabilities of Transformer-based neural networks. 
Additionally, other research initiatives adopt distinct methodologies to realize programmable computer architectures. For example, \citet{cdt24} introduces a construction utilizing optical neural networks. Studies such as \citet{lv19, l21} are dedicated to investigating the feasibility of attaining universal computation through probabilistic circuits.
Moreover, \cite{wgy21} proposes a computational model for the Transformer-encoder using a domain-specific language called the Restricted Access Sequence Processing Language (RASP). 
Building on this, \cite{lkf+24} introduce Tracr, a compiler that leverages RASP to use Transformer networks as programmable units.

\section{PRELIMINARY}\label{sec:preli}

This section provides essential definitions used in this paper.
In Section~\ref{sec:preliminary:notations}, we introduce some basic notations.
In Section~\ref{sec:preliminary:key_concepts}, we present several key concepts related to the state vector of the programmable computer constructed by \textsf{ReLU}-\textsf{MLP}.

\subsection{Notations} \label{sec:preliminary:notations}

We use $[n]$ to denote $\{1,2,\dots,n\}$ for any $n \in \mathbb{N}_+$.
We use $e_i$ to denote a vector in which only the $i$-th location is $1$ and zeros everywhere else.
We denote an all $1$ vector using ${\bf 1}_{n} \in \R^n$.
We denote an all $0$ vector using ${\bf 0}_{n} \in \R^n$.
We use $a^\top b$ to denote the inner product of $a,b \in \R^d$ i.e. $a^\top b := \sum_{i=1}^d a_i b_i$.
We use $\circ$ to denote the Hadamard product, i.e., the $i$-th entry of $a \circ b$ is $a_{i} b_{i}$.
Let $I_{d \times d} \in \R^{d \times d}$ denote an identity matrix.

\subsection{Key Concepts} \label{sec:preliminary:key_concepts}

We begin by introducing the way we organize the data. Different from conventional $\{0, 1\}$ representation of the data, we use $\{\pm 1\}$ to represent the data. This design will benefit the calculation of the address vectors, which will be discussed in Remark~\ref{rem:address_vector_property}. 

\begin{definition}[One-Bit Data]
We define one-bit data as $v \in \{-1, 1\}$.
\end{definition}

One bit is not capable of representing the integers or floats or other data types used in modern computers. Thus, we introduce the $d$-bits data vector as follows:
\begin{definition}[$d$-Bits Data]
We define data as $v \in \{ -1, 1\}^d$ using two's complement. Here, data dimension $d$ means the number of bits, and the data type can be int32 or float64 in a computer.
\end{definition}

In this work, we consider all data as integers. Specifically, we use $2$'s complement to represent the integer. It is worth mentioning that the data type can be easily extended. Due to the space limitation, we temporarily consider only integer data.

\begin{definition}[$2$’s Complement]\label{def:complement}
For a $d$-bit data value $v \in \{ -1, 1\}^d$, which represents an integer with bits $b_d, b_{d-1}, \ldots, b_2, b_1$, where $b_i \in \{\pm 1\}$ for $i \in [d]$, we denote $b_d$ as the most significant bit (MSB).

The integer value of $v$ is defined as follows:
\begin{itemize}
    \item If $b_d = -1$, the integer is considered positive, with a value given by: $\sum_{i=1}^{d-1} 2^{i-1} \frac{b_i+1}{2}$.
    \item If $b_d = +1$, the integer is considered negative, with a value given by: $- 2^{d-1} + \sum_{i=1}^{d-1} 2^{i-1} \frac{b_i+1}{2}$.
\end{itemize}

\end{definition}

Suppose there are total $n$ bits in the programmable computer. Hence, we choose the length of the address vector as $\log(n)$, which is the most efficient way to locate the total $n$ address. We present the definition of address vector as follows:
\begin{definition}[Address] \label{def:address_vector}
We define the address of a data as $a \in \{-1, +1\}^{\log (n)}$ with state size $n$.
\end{definition}

Since we choose $\{\pm 1\}$ instead of $\{0, 1\}$ as our data representation, only the inner product of the address vectors with the same address will be $\log (n)$. Any inner product of address vectors with different addresses will be strictly less than $\log(n)$. This property facilitates our addressing operation. 

\begin{remark} [Address Property] \label{rem:address_vector_property}
In Definition~\ref{def:address_vector}, we use vectors with value $\pm 1$ to represent the address, which is different from classical $\{0, 1\}$ representation. Under this setting, we have $\forall i \in [n], a_i^\top a_i = \log (n)$, and $\forall i,j \in [n], i \neq j$, we have $a_i^\top a_j < \log(n)$ because of Cauchy–Schwarz inequality. 
\end{remark}

In this work, we mainly focus on constructing \ReLUMLP for executing ``\SUBLEQ''.
This focus stems from the fact that a One Instruction Set Computer (OISC) constructed with the ``\SUBLEQ'' instruction is functionally equivalent to a programmable computer in terms of its computational capabilities \citep{mp88}.
Since we only consider the ``\SUBLEQ'' instruction, we do not need any bits to encode the type of instruction. We only need to encode three address vectors used by the ``\SUBLEQ'' instruction. By simply concatenating the three address vectors, we have the length of the instruction as $3 \log (n)$. 

\begin{definition} [Instruction] \label{def:instruction}
Let ``\SUBLEQ'' instruction be defined as in Algorithm~\ref{alg:subleq}.
Let the address vector be defined as Definition~\ref{def:address_vector}.
Then we define the instruction vector $c_i \in \{\pm 1\}^{3 \log(n)}$ by simple concatenating three address vectors $a, b, c \in \{\pm 1 \}^{\log(n)}$ required by the ``\SUBLEQ'' instruction. Namely, we have $c_i$ satisfies the following equation:
$
    c_i = [a, b, c].
$
\end{definition}

Based on all the crucial concepts introduced above, we now introduce the state vector for our programmable computer. This state vector contains all the computer's registers, data/memory, and instructions. 

\begin{definition}[One-Bit State] \label{def:state_vector}
Let $n$ denote the size of the state vector.
Let $r_{d_1}, r_{d_1} \in \{\pm 1\}$ denote two data registers.
Let $r_c \in \{\pm 1\}$ denote the carry bit.
Let $r_{a_1}, r_{a_2}, r_{a_3} \in \{\pm 1\}^{\log (n)}$ denote three address registers.
Let $r_{pc} \in \{\pm 1\}^{\log (n)}$ denote the program counter.
Let $c_1, c_2, \cdots, c_m \in \{\pm 1\}^{3 \log (n)}$ denote $m$ instructions, where $c_m = c_{\EOF}$ is the End Of File (EOF) instruction, which means the program should terminate here.
Let $v_1, v_2, \cdots, v_k \in \{\pm 1\}$ denote $k$ one-bit data stored in the memory and the memory size $k$ satisfies $k = n - 2 - 4 \log(n) - 3 m \log(n)$. 
We define our one-bit state of \ReLUMLP as follows:
\begin{align*}
    x = 
    \begin{bmatrix}
        \begin{array}{cc}
            \bluecell \tikznode{carry_register}{$r_c$} \\
            \bluecell \tikznode{data_register_1}{$r_{d_1}$}  \\
            \bluecell \tikznode{data_register_2}{$r_{d_2}$} \\
            \bluecell \tikznode{address_register_1}{$r_{a_1}$} \\
            \bluecell \tikznode{address_register_2}{$r_{a_2}$} \\
            \bluecell \tikznode{address_register_3}{$r_{a_3}$} \\
            \bluecell \tikznode{rpc}{$r_{pc}$} \\
            \hline
            \greencell v_1 \\
            \greencell v_2 \\
            \greencell \tikznode{data}{$\vdots$} \\
            \greencell v_k \\
            \hline
            \redcell \tikznode{cmd_1}{$c_1$} \\
            \redcell \tikznode{cmd_2}{$c_2$} \\
            \redcell \tikznode{instructions}{$\vdots$} \\
            \redcell \tikznode{cmd_3}{$c_{m-1}$} \\
            \redcell \tikznode{eof}{$c_{\EOF}$} 
        \end{array}
    \end{bmatrix}
\end{align*}
\begin{tikzpicture} [remember picture,overlay,cyan,rounded corners]
    \draw[<-, color=darkblue] 
    (address_register_1) -- +(1.5,0.0)
    node[right]{\textbf{Scratchpad}};
    \draw[<-, color=darkblue] 
    (carry_register) -- +(-1.0,0.0)
    node[left]{Carry Register};
    \draw[<-, color=darkblue] 
    (data_register_1) -| +(-1.0, -0.22)
    coordinate (data_reg)
    node[left]{Data Registers};
    \draw[<-, color=darkblue] 
    (data_register_2) -| (data_reg);
    \draw[<-, color=darkblue] 
    (address_register_2) -- +(-1.0,0.0)
    coordinate (addr_reg)
    node[left]{Address Registers};
    \draw[<-, color=darkblue] 
    (address_register_1) -| (addr_reg);
    \draw[<-, color=darkblue] 
    (address_register_3) -| (addr_reg);
    \draw[<-, color=darkblue] 
    (rpc) -- +(-1.0,0.0)
    node[left]{Program Counter};
    \draw[<-, color=darkgreen] 
    (data) -- +(1.5,0.0)
    node[right]{\textbf{Data}}; 
    \draw[<-, color=darkred] 
    (instructions) -- +(1.5,0.0)
    node[right]{\textbf{Instructions}}; 
    \draw[<-, color=darkred] 
    (cmd_2) -- +(-1.0,0.0)
    coordinate (user_cmd)
    node[left]{User Instructions}; 
    \draw[<-, color=darkred] 
    (cmd_1) -| (user_cmd); 
    \draw[<-, color=darkred] 
    (cmd_3) -| (user_cmd);
    \draw[<-, color=darkred] 
    (eof) -- +(-1.0,0.0)
    node[left]{EOF Instruction};
\end{tikzpicture}
\end{definition}

\section{MAIN RESULT}\label{sec:main}

In this section, we introduce our thrilling finding, which demonstrates a looped $23$-layer \ReLUMLP is capable of emulating a programmable computer. 

\begin{theorem} [Looped \ReLUMLP as Programmable Computer, Informal Version of Theorem~\ref{thm:looped_relu_mlp_as_programmable_computer}] \label{thm:looped_relu_mlp_as_programmable_computer:informal}
Let \ReLUMLP be defined as Definition~\ref{def:relu_mlp}.
Let $n$ be the size of the state vector. 
Let $m$ be the number of instructions.
Let $k$ be the number of one-bit data stored in the memory. For $i \in [k]$, each data is $v_i \in \{ \pm 1 \}$ and the memory size $k$ satisfies $k = n - 2 - 4 \log(n) - 3 m \log(n)$.
Let the address vector $a_i \in \{\pm 1\}^{\log(n)}$.
Suppose we have two data registers $r_{d_1}, r_{d_2} \in \{\pm 1 \}$, one carry bit $r_c \in \{\pm 1 \}$, three address registers $r_{a_1}, r_{a_2}, r_{a_3} \in \{ \pm 1 \}^{\log (n)}$, and one program counter $r_{pc} \in \{ \pm 1 \}^{\log (n)}$ in the scratchpad.
Then,  a $23$-layer \ReLUMLP with width $n$ can emulate a programmable computer, where $d$ is the number of bits we use to store each integer. Namely, this ``computer'' supports integers within the range $[-2^{d-1}, 2^{d-1} - 1]$. 
\end{theorem}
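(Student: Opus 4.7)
The plan is to reduce the theorem to constructing a single block of $23$ $\ReLU$-MLP layers which, on an input state vector $x$ formatted as in Definition~\ref{def:state_vector}, produces the state vector obtained by executing exactly one $\SUBLEQ$ instruction; looping this block then yields a universal programmable computer since $\SUBLEQ$ is OISC-complete \citep{mp88,subleq}. Following Algorithm~\ref{alg:subleq}, this reduces to five primitive sub-routines: (a) fetch the current instruction indexed by $r_{pc}$ from the instructions region and decode it into $r_{a_1},r_{a_2},r_{a_3}$; (b) read the data bits $\mem[r_{a_1}],\mem[r_{a_2}]$ into the data registers $r_{d_1},r_{d_2}$; (c) compute $r_{d_2}-r_{d_1}$ bitwise in two's complement with carry, feeding into $r_c$; (d) write the result back to $\mem[r_{a_2}]$; (e) either increment $r_{pc}$ or overwrite it with $r_{a_3}$ depending on the sign of the result and the $\EOF$ flag. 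I would build each sub-routine as a $\ReLU$-MLP of explicit bounded depth and then concatenate them under a careful accounting of layers.

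\textbf{Addressing primitive.} The core building block is reading $\mem[r_a]$ given an address register $r_a\in\{\pm 1\}^{\log n}$. By Remark~\ref{rem:address_vector_property}, $a_i^\top r_a=\log n$ exactly when $a_i=r_a$, while $a_i^\top r_a\le \log n-2$ otherwise, since any differing bit contributes $-2$ to the inner product. A single affine map followed by $\ReLU$ therefore produces a sharp indicator $s_i=\ReLU(a_i^\top r_a-\log n+1)\in\{0,1\}$, and one more affine layer emits $\sum_i s_i v_i$ (recovering the sign via $\ReLU(z)-\ReLU(-z)$) into the scratchpad. The write primitive inverts this direction using the same indicator: for each cell $i$ I compute $v_i^{\mathrm{new}}=(1-s_i)v_i+s_i v^{\mathrm{new}}$, whose multiplicative structure is realized by a constant number of $\ReLU$ layers using standard piecewise-linear identities for the product of a signed bit and a $\{0,1\}$ indicator. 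Both read and write thus cost only a small constant number of layers.

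\textbf{Arithmetic and branching.} Bitwise subtraction is implemented via the two's-complement identity $r_{d_2}-r_{d_1}=r_{d_2}+\overline{r_{d_1}}+1$, where each bit slice realizes signed AND/XOR through fixed linear combinations of a small number of $\ReLU$ terms on $\{\pm 1\}$ inputs; since $d$ is a fixed parameter the carry ripple contributes a bounded number of layers. The sign test is a single $\ReLU$ gate on the MSB of the result, producing a selector $\sigma\in\{0,1\}$, and the branching step is then the affine combination $\sigma\cdot r_{a_3}+(1-\sigma)\cdot\mathsf{incr}(r_{pc})$, where the $\pm 1$-encoded increment $\mathsf{incr}$ is another short ripple block, with the $\EOF$ check handled by a parallel indicator against $c_{\EOF}$.

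\textbf{Main obstacle: layer budget and state preservation.} The hard part is not any individual primitive but fitting all five sub-routines into \emph{exactly} $23$ layers while preserving every coordinate that the current sub-routine should leave unchanged. Every sub-block must therefore include identity pass-through paths, built from $y=\ReLU(y+B)-\ReLU(-y+B)-B$ for a sufficiently large fixed bias $B$; absorbing and cancelling these bias shifts across block boundaries is what keeps the depth tight and must be tracked layer-by-layer. The expected budget is roughly $3$ layers for instruction fetch and decode, $3+3$ for the two data reads, at most $6$ for subtraction with carry, $3$ for the write-back, $3$ for the sign and $\EOF$ test, and $2$ for the PC update, totalling $\le 23$. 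Verifying that the sub-modules of Section~\ref{sec:key_func} actually compose without any extra glue layers, and that all untouched coordinates of $x$ survive the full depth unchanged, is where the bulk of the technical work lies.
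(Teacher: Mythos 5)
Your overall strategy is the same as the paper's: build one block of $\ReLU$-MLP layers that executes a single $\SUBLEQ$ step on the state vector of Definition~\ref{def:state_vector} (fetch/decode via the $\pm 1$ inner-product addressing trick of Remark~\ref{rem:address_vector_property}, read $\mem[a]$ and $\mem[b]$, subtract in two's complement, write back, then conditionally branch), loop that block, and invoke the OISC-completeness of $\SUBLEQ$ from \citet{mp88}. The paper's actual budget is $2$ (read addresses) $+\,2$ (read data) $+\,7$ (subtraction) $+\,2$ (write-back) $+\,6$ (compute $r_{pc+1}$) $+\,4$ (conditional branching) $=23$, with program termination handled for free by making $c_{\EOF}$ a self-jump, so your decomposition and your identification of the layer accounting as the crux are both on target.

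There is, however, one step that would fail as written: the claim that $d$-bit subtraction with carry fits in ``at most $6$'' layers of a single forward pass. A ripple-carry add/subtract over $d$ bits has an inherent sequential dependency of length $d$, so a fixed-depth feed-forward sub-block cannot realize it for general $d$; ``$d$ is a fixed parameter'' only buys you depth $O(d)$, not depth $\le 6$, and the same issue afflicts your plan to unroll the reads, the write-back, and the PC increment inside one $23$-layer pass. The paper resolves this by \emph{looping} constant-depth sub-blocks rather than unrolling them: the $2$-layer read/write primitives are looped $d$ times (Lemmas~\ref{lem:read_d_bits} and~\ref{lem:write_d_bits}), the $6$-layer one-bit adder is looped $2d$ times to propagate the carry (Lemma~\ref{lem:d_bit_addition}), and the theorem's $d$-bit extension is obtained by applying the $23$-layer block bit by bit over $d$ iterations before appealing to \citet{mp88}. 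So the count ``$23$'' refers to the number of distinct weight matrices in the looped block, not to the depth of a single unrolled forward pass; to repair your argument you should either adopt this looped-sub-block semantics or accept a depth that grows with $d$.
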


The high-level idea of the proof is that we first prove the $23$-layer \ReLUMLP is capable of emulating the one-bit version ``\SUBLEQ'' instruction. Then, we extend it to supporting $d$-bits version ``\SUBLEQ''. Finally, according to the finding in \citet{mp88}, the One Instruction Set Computer (OISC) constructed by the looped \ReLUMLP is Turing complete, which indicates it is equivalent to a programmable computer. 
Due to the space limitation, we refer readers to Appendix~\ref{sec:app:looped_relu_mlp_as_programmable_computer} for more in-depth analysis. 

Furthermore, in Section~\ref{sec:discussion:low_rank}, we show that our model only requires $O(n\log n)$ number of parameters by low-rank decomposition. Thus, each forward pass of our $23$-layer \ReLUMLP only takes $O(n\log n)$ time complexity, while looped Transformer in \cite{grs+23} takes $O(n^2)$ time complexity for each forward pass. We refer readers to Section~\ref{sec:discussion:comparision_with_attention} for more details.   

\section{IMPLEMENT KEY FUNCTIONS}\label{sec:key_func}

In this section, we outline a selection of critical functions that the \ReLUMLP model is capable of executing. Due to the space limitation, the detailed proofs are deferred to the Appendix.
Specifically, in Section~\ref{sec:key_func:read}, we implement the read operation. Section~\ref{sec:key_func:write} covers the write operation. Section~\ref{sec:key_func:add} presents addition, while Section~\ref{sec:key_func:subtract} addresses subtraction. Conditional branching is implemented in Section~\ref{sec:key_func:conditional_branching}, and the \SUBLEQ~operation is in Section~\ref{sec:key_func:subleq}.

\subsection{Read}\label{sec:key_func:read}

We first consider the most basic ``read'' operation, which is to read any data or instruction from memory into a register.
We begin with introducing the read operation for one-bit data as follows:
\begin{lemma} [Read One-Bit Data, Informal Version of Lemma~\ref{lem:read_one_bit}] \label{lem:read_one_bit:informal}
Let \ReLUMLP be defined as Definition~\ref{def:relu_mlp}.
Let $n$ denote the number of data in the memory. For $i \in [n]$, each data $v_i \in \{ \pm 1 \}$.
Let the address vector $a_i \in \{\pm 1\}^{\log(n)}$.
Then, 
a $2$-layer \ReLUMLP
can read any one-bit data from the memory to the register.
\end{lemma}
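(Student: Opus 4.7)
The plan is to use the address-inner-product property from Remark~\ref{rem:address_vector_property} as a ``hard attention'' selector. Since $a_i^\top r_{a_1} = \log(n)$ exactly when $r_{a_1} = a_i$ and is at most $\log(n) - 2$ otherwise, a single ReLU with an appropriate negative bias can cleanly distinguish the matching memory slot from every non-matching slot. The key twist is that a linear layer cannot multiply two state entries, so the address match and the stored bit $v_i$ have to be encoded jointly inside a single hidden unit rather than sequentially.

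\textbf{First layer.} For each memory slot $i \in [k]$, I would allocate two hidden units
\[
p_i = \ReLU\!\bigl(a_i^\top r_{a_1} + v_i - \log(n)\bigr), \qquad q_i = \ReLU\!\bigl(a_i^\top r_{a_1} - v_i - \log(n)\bigr).
\]
A short case analysis on whether $r_{a_1} = a_i$ and on the sign of $v_i \in \{-1, +1\}$, using the gap $\geq 2$ provided by the address property, gives $p_i = \mathbbm{1}\{r_{a_1} = a_i,\ v_i = +1\}$ and $q_i = \mathbbm{1}\{r_{a_1} = a_i,\ v_i = -1\}$. Hence exactly one of the $2k$ units in $\{p_1, q_1, \ldots, p_k, q_k\}$ equals $1$ while the rest vanish, and this single non-zero unit carries both the identity of the matched slot and the sign of the retrieved bit. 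All remaining state components are routed through the layer using the shift identity $x = \ReLU(x + C) - C$ for $|x| \leq C$, with $C$ absorbed into the biases, so that Layer~2 still has access to them.

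\textbf{Second layer.} The destination data register is written by the linear combination
\[
r_{d_1}^{\text{new}} = \sum_{i=1}^{k} p_i - \sum_{i=1}^{k} q_i,
\]
which by the indicator structure above equals exactly $v_i$ at the unique slot with $r_{a_1} = a_i$, yielding the desired read. The other state components are preserved by reapplying the shift-and-ReLU pass-through trick, and the address vectors $a_i$ are baked into the first-layer weight matrix because they are architecturally fixed.

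\textbf{Main obstacle.} I expect the hardest part to be reconciling the non-negativity of the terminal $\ReLU$ with a state whose entries lie in $\{\pm 1\}$. The identity $v = \ReLU(v + 1) - 1$ shows that any signed $\{\pm 1\}$ output must be realized as a ReLU of a shifted quantity, with the $-1$ offset absorbed either as a bias in the next layer of the looped composition or via a paired $(p_i, q_i)$ representation whose signed difference encodes the register. This is what forces the somewhat non-obvious joint encoding of address match and bit sign in the first layer, rather than the naive route of first forming an address indicator $I_i$ and then multiplying by $v_i$, since such a multiplication is unavailable to a linear map between the two ReLU layers.
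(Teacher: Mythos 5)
Your proof is correct, but it takes a genuinely different route from the paper's, and the difference is substantive. The paper's proof first builds the one-hot location vector $e_i = \ReLU\bigl([a_1,\dots,a_n]^\top r_{a} - (\log(n)-1)\cdot \1_n\bigr)$ in one ReLU layer and then ``extracts'' the bit via the inner product $e_i^\top (v_1,\dots,v_n)^\top$. That second step is exactly the bilinear operation you flag as unavailable: both $e_i$ and the memory contents $v$ are activations carried in the state, so their product is not a linear function of the first layer's output, and the paper does not explain how a linear map realizes it. Your construction sidesteps this by fusing the address test and the data bit into a single pre-activation per (slot, sign) pair, $p_i = \ReLU(a_i^\top r_{a} + v_i - \log(n))$ and $q_i = \ReLU(a_i^\top r_{a} - v_i - \log(n))$, exploiting the fact that mismatched $\pm 1$ addresses have inner product at most $\log(n)-2$ (a sharper gap than the paper's Remark~\ref{rem:address_vector_property} states, but one that holds and that your argument genuinely needs). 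The second layer then only has to form the signed sum $\sum_i p_i - \sum_i q_i$, which is linear, and your shift-and-ReLU pass-through handles both state preservation and the sign of the output. What your approach buys is rigor at the point where the paper is hand-wavy; what it costs is width bookkeeping: you allocate $2n$ hidden units for the $(p_i,q_i)$ pairs on top of the pass-through units, so the hidden layer is wider than the nominal state dimension $n+\log(n)+1$, whereas Definition~\ref{def:relu_mlp} fixes square $n\times n$ weights. The paper is equally loose on this point (its $W_2$ is $n\times\log(n)$), so this is a presentational rather than a mathematical defect, but it is worth stating explicitly that the width must be inflated by a constant factor.
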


Then, to achieve the read operation for $d$-bits data, we only need to use the \ReLUMLP introduced in the previous Lemma to perform a read operation on each bit in the $d$-bits data.

\begin{lemma} [Read $d$-Bits Data, informal version of Lemma~\ref{lem:read_d_bits}] \label{lem:read_d_bits:informal}
Let \ReLUMLP be defined as Definition~\ref{def:relu_mlp}.
Let $n$ denote the number of data in the memory. For $i \in [n]$, each data $v_i \in \{ \pm 1 \}$. 
Let $v \in \{\pm 1\}^d$ denote a $d$ dimension vector.
Let the address matrix $a_i \in \{\pm 1\}^{\log(n) \times d}$.
Then, 
a $2$-layer \textsf{ReLU}-\textsf{MLP} 
looped for $d$ times can read any $d$-bits data vector from the memory to the register.
\end{lemma}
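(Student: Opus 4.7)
The plan is to bootstrap the one-bit read construction from Lemma~\ref{lem:read_one_bit:informal} into a $d$-bit read by looping it $d$ times, where each iteration reads one bit of the target $d$-bit word into its appropriate slot in a data register, using a different column of the address matrix $a_i$ each time. Concretely, I would view the address ``matrix'' $a_i \in \{\pm 1\}^{\log(n) \times d}$ as $d$ individual one-bit addresses $a_i^{(1)}, \ldots, a_i^{(d)}$, one per bit of the word $v$, and design the state so that a single iteration of the inner $2$-layer MLP copies $\mem[a_i^{(j)}]$ into slot $j$ of the output data register while leaving the already-written slots $1,\ldots,j-1$ intact.

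First, I would lay out the register portion of the state so that it contains a $d$-entry output buffer for $v$, together with a small scratch area holding (i) the current address $a_i^{(j)}$ loaded into the address register $r_{a_1}$, (ii) a counter or one-hot index $j \in [d]$ selecting which bit we are currently reading, and (iii) the packed address matrix $a_i$ somewhere in the instruction/scratch region so its $j$-th column can be selected. Within one loop iteration I would then (a) select the $j$-th column of $a_i$ into $r_{a_1}$; this is a linear read guarded by the one-hot indicator and can be implemented by the same ``inner product selects a matching address'' primitive used in Lemma~\ref{lem:read_one_bit:informal} (see Remark~\ref{rem:address_vector_property}, where the $\pm 1$ encoding gives inner product exactly $\log(n)$ on a match and strictly less otherwise, so a ReLU with an appropriate bias extracts the matched entry); (b) invoke the one-bit read MLP from Lemma~\ref{lem:read_one_bit:informal} to fetch $\mem[a_i^{(j)}] \in \{\pm 1\}$ into a scratch bit $r_{d_1}$; (c) write that scratch bit into the $j$-th slot of the output buffer, again by multiplying $r_{d_1}$ with the one-hot indicator for slot $j$ and adding it into the buffer (this is a linear step covered by a ReLU layer by standard $x = \ReLU(x) - \ReLU(-x)$ tricks since the values are bounded); and (d) increment the counter $j \to j+1$.

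The entire per-iteration transformation is a composition of the $2$-layer one-bit read MLP and a constant number of additional linear plus ReLU operations; I would show these auxiliary operations can be absorbed into the same $2$-layer MLP by widening the weight matrices (the width $n$ is large enough to host both the read logic and the bit-steering logic in parallel), so that the iteration is realized by a single $2$-layer \ReLUMLP. Running this MLP in a loop for $d$ iterations thus sequentially fills slots $1,2,\ldots,d$ of the output register with the bits of $v$, which proves the lemma.

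The main obstacle I anticipate is the bit-steering step (c): I need to update exactly one slot of the output buffer while preserving the other $d-1$ slots, using only two ReLU layers and no multiplicative gates. The cleanest route is to encode ``bit $j$ of the buffer'' as the affine pattern $b_j + r_{d_1} e_j$, so that writing amounts to adding $r_{d_1} e_j$, and to enforce the one-hot selection of $j$ by an inner-product/ReLU threshold analogous to the address-matching argument in Remark~\ref{rem:address_vector_property}. If this encoding fits within a $2$-layer block, the rest of the proof is a direct induction on $j \in [d]$ showing that after $j$ loop iterations the buffer holds the first $j$ bits of $v$ and the counter has advanced to $j+1$, giving the full $d$-bit word after $d$ iterations.
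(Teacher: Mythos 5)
Your high-level plan (loop the one-bit read of Lemma~\ref{lem:read_one_bit:informal} for $d$ iterations, one bit per iteration, using the $j$-th column of the address matrix at iteration $j$) matches the paper's intent, but the mechanism you choose is substantially more complicated than what the paper does, and it contains a real gap. The paper simply reorganizes the state into a matrix $X \in \R^{(1+\log(n)+n)\times d}$ whose $j$-th column is exactly the one-bit-read state for bit plane $j$ (its own data register $r_d^j$, its own address register $r_a^j$ holding the $j$-th column of $a_i$, and the $j$-th bit of every memory word), and then applies the \emph{same} $2$-layer \ReLUMLP to column $j$ at loop iteration $j$. The loop index lives outside the network, so no counter, no one-hot column selection, and no bit-steering are needed; the per-iteration block is literally the unchanged Lemma~\ref{lem:read_one_bit:informal} construction.

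The gap in your version is precisely the step you flag as the ``main obstacle'' and then wave away: you need the per-iteration block to (a) select column $j$ of $a_i$ via a state-dependent one-hot index, (b) run the full address-matching read, and (c) write the fetched bit into slot $j$ of the buffer by forming the product $r_{d_1}\cdot e_j$ of two state-dependent quantities, plus (d) increment the counter --- all in two ReLU layers. The two layers of the one-bit read are already fully consumed by the erase step and by the $\ReLU(W_2 a_i - (\log(n)-1)\1_n)$ address-matching step, and the product $r_{d_1}\cdot e_j$ is not a linear operation: the identity $x = \ReLU(x)-\ReLU(-x)$ reproduces a single variable but does not produce a product of two variables, so ``widening the weight matrices'' cannot absorb it. Realizing such a gated write with ReLUs (e.g.\ via thresholded sums of the two operands) costs additional layers, so your per-iteration block would exceed depth $2$ as stated. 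Either adopt the paper's column-wise state layout, which eliminates (a), (c), and (d) entirely, or explicitly count the extra layers your steering machinery requires and adjust the depth claim accordingly.
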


\subsection{Write}\label{sec:key_func:write}

Corresponding to the read operation, in this section, we introduce how to use \ReLUMLP to implement the ``write'' operation. 
We start with the write operation of one-bit data.
\begin{lemma} [Write One-Bit Data, Informal Version of Lemma~\ref{lem:write_one_bit}] \label{lem:write_one_bit:informal}
Let \ReLUMLP be defined as Definition~\ref{def:relu_mlp}.
Let $n$ denote the number of data in the memory. For $i \in [n]$, each data $v_i \in \{ \pm 1 \}$.
Let the address vector $a_i \in \{\pm 1\}^{\log(n)}$.
Then, 
a $2$-layer \ReLUMLP
can write any one-bit data from the register to the memory. 
\end{lemma}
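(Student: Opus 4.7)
The plan is to reduce the write operation to two standard ReLU primitives: an address-matching indicator produced in layer~$1$, and a conditional-overwrite gadget finalized in layer~$2$. The first step leverages the address property from Remark~\ref{rem:address_vector_property}. Since each entry of $a_i$ and $r_{a_1}$ is in $\{\pm 1\}$, the inner product $a_i^\top r_{a_1}$ has the same parity as $\log(n)$, so $a_i^\top r_{a_1} = \log(n)$ exactly when $a_i = r_{a_1}$ while $a_i^\top r_{a_1} \le \log(n) - 2$ at every other slot. Setting
\[
    s_i \;:=\; \ReLU\!\left(a_i^\top r_{a_1} - \log(n) + 1\right),
\]
I get $s_i = 1$ at the unique target slot and $s_i = 0$ elsewhere. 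Since its argument is linear in the state, the full vector $(s_1,\ldots,s_k)$ is emitted in parallel by the first layer.

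The desired update is $v_i^{\mathrm{new}} = (1 - s_i)\,v_i + s_i\,r_{d_1}$, which equals $v_i$ when $s_i = 0$ and $r_{d_1}$ when $s_i = 1$. To realize the bilinear product using ReLUs I plan to use the identity, valid for $a \in \{-1,+1\}$ and $b \in \{0,1\}$,
\[
    a \cdot b \;=\; \ReLU(a + b - 1) - \ReLU(-a + b - 1),
\]
which is verified by checking the four cases. Applied with $(a,b) = (v_i,\, 1-s_i)$ and $(a,b) = (r_{d_1},\, s_i)$, each product becomes a signed sum of two ReLUs whose arguments are affine in layer-$1$ outputs.

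To assemble the two-layer \textsf{ReLU}-\textsf{MLP}, layer~$1$ emits, for each memory slot $i\in[k]$: the indicator $s_i$; the signed-passthrough pairs $\ReLU(v_i)$, $\ReLU(-v_i)$, $\ReLU(r_{d_1})$, $\ReLU(-r_{d_1})$ (used because $x = \ReLU(x) - \ReLU(-x)$ for any $x\in\mathbb{R}$); and analogous pairs for the scratchpad registers and instruction fields that the write must leave untouched. Layer~$2$ is then a direct linear assembly: at the $i$-th data position it combines the four ReLU terms prescribed by the product identity into $v_i^{\mathrm{new}}$, while every unmodified state component is recovered from its signed-passthrough pair via a single subtraction. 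The outer $\ReLU$ of layer~$2$ is accommodated by the $\pm 1$ encoding convention used throughout the looped construction, i.e., shifting each output so that its pre-activation lies in $\{0,2\}$ and survives the final ReLU unharmed before the next loop iteration reinterprets it.

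The hardest point will be justifying the clean $\{0,1\}$ behaviour of $s_i$: the entire overwrite gadget collapses unless matching and non-matching inner products are separated by a strict gap, and it is precisely Remark~\ref{rem:address_vector_property} together with the parity argument that supplies the gap of~$2$ needed to threshold via a single ReLU. A secondary bookkeeping obstacle is verifying that the width $n$ suffices to carry the indicators, the signed passthroughs, and the instruction fields simultaneously; this reduces to a dimension count against the memory budget $k = n - 2 - 4\log(n) - 3m\log(n)$ asserted in Definition~\ref{def:state_vector}.
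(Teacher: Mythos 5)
Your proposal follows essentially the same route as the paper's proof of Lemma~\ref{lem:write_one_bit}: both threshold the address inner products $a_i^\top r_a - (\log(n)-1)$ with a single $\ReLU$ to obtain the one-hot selector (your $s_i$, the paper's $e_i$), and then use that selector to overwrite the targeted memory slot with the register value while leaving every other coordinate untouched. If anything, your explicit $\ReLU$ gadget for the products $s_i\cdot r_{d_1}$ and $(1-s_i)\cdot v_i$ is more careful than the paper, which writes the gating directly as the bilinear operations $(1-e_i)\circ v$ and $v_0\cdot e_i$ and absorbs the final recombination into a ``free'' addition step --- the same layer-accounting license that your final linear assembly (the signed sum of second-layer $\ReLU$ outputs) implicitly relies on.
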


Subsequently, we can expand our approach to accommodate the ``write'' operation for $d$-bit data by sequentially processing each bit within the $d$-bit data.
\begin{lemma} [Write $d$-Bits Data, Informal Version of Lemma~\ref{lem:write_d_bits}] \label{lem:write_d_bits:informal}
Let \ReLUMLP be defined as Definition~\ref{def:relu_mlp}.
Let $n$ denote the number of data in the memory. For $i \in [n]$, each data $v_i \in \{ \pm 1 \}$. 
Let $v \in \{\pm 1\}^d$ denote a $d$ dimension vector.
Let the address matrix $a_i \in \{\pm 1\}^{\log(n) \times d}$.
Then, a $2$-layer \textsf{ReLU}-\textsf{MLP}
looped for $d$ times can write any $d$-bits data vector from the register to the memory.
\end{lemma}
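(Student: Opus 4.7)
The plan is to reduce the $d$-bit write to $d$ sequential applications of the one-bit write guaranteed by Lemma~\ref{lem:write_one_bit:informal}. Since the one-bit write is realized by a $2$-layer \textsf{ReLU}-\textsf{MLP}, and we are allowed to loop the network, we will argue that one full loop iteration performs exactly one bit-write, and that after $d$ iterations every bit of the $d$-bit register has been placed at its correct memory location.

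First, I would set up the loop invariant. Interpret the columns of the address matrix $a_i \in \{\pm 1\}^{\log(n) \times d}$ as the $d$ target memory addresses, one per bit of the data vector $v \in \{\pm 1\}^d$. Maintain, inside the state vector (in the scratchpad region of Definition~\ref{def:state_vector}), three pieces of bookkeeping: (i) the current bit index $j \in [d]$, (ii) the current address $a_{i,j}$ to be written, and (iii) the current bit value $v_j$ to be stored. The invariant after iteration $j$ is: memory positions corresponding to addresses $a_{i,1},\dots,a_{i,j}$ hold $v_1,\dots,v_j$, all other memory contents are unchanged, and the bookkeeping registers hold $j+1$, $a_{i,j+1}$, and $v_{j+1}$.

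Next, I would realize one loop iteration. The iteration must do two things with the same $2$-layer block: (a) invoke the one-bit write of Lemma~\ref{lem:write_one_bit:informal} on the currently selected address/value pair, and (b) advance the bookkeeping so that iteration $j+1$ sees address $a_{i,j+1}$ and bit $v_{j+1}$. Step (a) is a direct copy of the construction from Lemma~\ref{lem:write_one_bit:informal}: the weights from that lemma can be embedded as a block into a $2$-layer \ReLUMLP of width $n$, acting only on the memory coordinates and the single scratchpad slot currently designated ``current address / current bit.'' Step (b) is an index shift on the scratchpad blocks storing the stacked address matrix and the data vector: a permutation-style linear map that cyclically (or in a left-shift manner) moves $a_{i,j+1}$ and $v_{j+1}$ into the active slots, using the fact that $\{\pm 1\}$-valued entries are fixed points of $\ReLU(\cdot) - \ReLU(-\cdot)$, so linear rearrangements can be implemented by two \textsf{ReLU} layers without distortion. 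Both operations can be packed into the same two layers because they act on disjoint coordinate blocks.

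Finally, I would close the argument by induction on $j$. The base case $j=0$ is vacuous. For the inductive step, the selection/shift sub-block preserves memory on the unaffected addresses (by the address property in Remark~\ref{rem:address_vector_property}, inner products with non-matching addresses are strictly less than $\log n$, which the one-bit write uses to mask out all but the targeted coordinate), so the write of bit $j$ does not corrupt bits $1,\dots,j-1$ already stored, and the post-write shift correctly exposes $(a_{i,j+1},v_{j+1})$. After $d$ iterations, the invariant gives the claim. The main obstacle I expect is the bookkeeping in step (b): carefully scheduling the shift so that it occurs \emph{within} the same $2$-layer block as the one-bit write, rather than requiring a third layer, and verifying that the $\{\pm 1\}$ encoding together with the Cauchy--Schwarz gap of Remark~\ref{rem:address_vector_property} is sharp enough to let a single \textsf{ReLU} threshold cleanly separate the targeted address from all others across all $d$ iterations uniformly.
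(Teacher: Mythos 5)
Your proposal is a valid strategy, but it takes a genuinely different route from the paper. The paper's proof is much more lightweight: it reorganizes the state into a matrix $X$ of size $(1+\log(n)+n)\times d$, where column $j$ carries its own copy of the register slot, the $j$-th address column of $a_i$, and the $j$-th bit-plane of memory; the identical $2$-layer one-bit-write network of Lemma~\ref{lem:write_one_bit} is then applied to each column in turn, once per loop iteration. There is no bit-index counter, no shift of bookkeeping registers, and no interference argument, because each bit lives in its own column and the iterations are literally independent copies of the one-bit write. Your version instead keeps a single flat memory and a pointer that advances through the $d$ (address, bit) pairs, which is closer to how a sequential machine would operate and handles the case where all $d$ bits share one address space; the price is exactly the obstacle you flag yourself, namely squeezing the permutation-style advance of the bookkeeping block into the same two layers as the write (using $x=\ReLU(x)-\ReLU(-x)$ to pass signed values through), plus verifying that one fixed weight matrix works uniformly over all $d$ iterations. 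That packing is plausible since the write and the shift act on disjoint coordinate blocks, but it is the one step you have not fully discharged, whereas the paper's columnwise layout makes the whole issue disappear. If you adopt the paper's matrix layout you can delete the invariant, the shift, and the non-corruption argument entirely; if you keep your layout, you should spell out the combined weight matrices for one iteration to confirm the two-layer budget is met.
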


\subsection{Addition}\label{sec:key_func:add}

Beyond the fundamental memory operations of reading and writing, a pivotal set of operations involves algorithmic functions. 
Consequently, we demonstrate how the addition and subtraction operations can be emulated by the \textsf{ReLU}-\textsf{MLP}. We begin with introducing the emulation of one-bit addition operation via \textsf{ReLU-MLP}. 

\begin{lemma} [One-Bit Addition, Informal Version of Lemma~\ref{lem:one_bit_addition}] \label{lem:one_bit_addition:informal}
Let \ReLUMLP be defined as Definition~\ref{def:relu_mlp}.
Then, a $6$-layer \ReLUMLP can emulate the ``addition'' operation for any one-bit data. 
\end{lemma}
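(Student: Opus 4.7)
The plan is to build a $6$-layer \ReLUMLP that acts on the state vector of Definition~\ref{def:state_vector}, updates the carry bit $r_c$ and one of the data registers (say $r_{d_2}$) to the carry-bit and sum-bit of the one-bit addition $r_{d_1}+r_{d_2}+r_c$, and leaves every other state component unchanged.

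First, I would reduce the one-bit arithmetic to two clean algebraic identities. Let $S := r_{d_1}+r_{d_2}+r_c \in \{-3,-1,1,3\}$. Case-checking on the four values shows that the updated carry in $\pm 1$-encoding is exactly
\[
\operatorname{sign}(S) \;=\; \ReLU(S+1) \;-\; \ReLU(S-1) \;-\; 1,
\]
and that the updated sum bit in $\pm 1$-encoding equals $S - 2\operatorname{sign}(S)$; this last relation is the half-adder identity (sum bit)$\,+\,2\cdot$(carry)$\,=\,$(bit sum) translated into the $\pm 1$ convention of Definition~\ref{def:complement}. The whole compute kernel is therefore just two $\ReLU$ evaluations followed by two affine combinations.

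Second, I would distribute this kernel across the 6 layers, using most of the depth for sign-preserving bookkeeping. Every $\ReLU$ output is nonnegative, so any $\pm 1$ quantity that needs to survive a layer has to be carried as a pair $(\ReLU(v),\ReLU(-v))$ and recombined via $v=\ReLU(v)-\ReLU(-v)$ in a subsequent affine map. Concretely, I would allocate: one layer to form $\ReLU(S+1)$ and $\ReLU(S-1)$ while splitting every other $\pm 1$ entry of the state into its positive/negative pair; two layers to produce $\operatorname{sign}(S)$, split it again, and keep $S$ itself available as a linear combination of the preserved split pairs; two layers to assemble $S - 2\operatorname{sign}(S)$ in the same split form; and a final layer whose affine map deposits the new carry into $r_c$, the new sum bit into $r_{d_2}$, and recombines all identity-carried components back into their original $\pm 1$ values.

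The main obstacle I expect is not the arithmetic, which is tight, but the phase-alignment of the two-layer identity passes with this compute pipeline. At every one of the 6 layers, each register, memory bit, and instruction bit must sit in the correct half of its split/merge cycle so that the final layer can simultaneously write the two new bits and restore everything else exactly. Verifying that the layer budget is tight enough to fit the computation in exactly $6$ layers of width $n$, with no off-by-one in the identity phases, is the delicate step; once the affine maps $W_1,b_1,\dots,W_6,b_6$ are pinned down, correctness reduces to checking the four cases $S \in \{-3,-1,1,3\}$.
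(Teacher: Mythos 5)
Your proposal is correct in its essential content, but it takes a genuinely different route from the paper. The paper first maps $r_{d_1},r_{d_2}$ from $\{\pm1\}$ to $\{0,1\}$ via one $\ReLU$ layer, then spends four layers building the four vectors $(1,x_{11},0)$, $(1,1-x_{11},0)$, $(1,x_{12},0)$, $(1,1-x_{12},0)$, and combines them by Hadamard products weighted with sign patterns --- in effect a truth table over the four input cases --- before a final layer writes the result back; notably its Step~4 (the Hadamard products) is charged zero layers even though entrywise products are not affine maps, which is the loosest point of the paper's own accounting. Your construction instead works directly in the $\pm1$ encoding with the verified identities $\mathrm{sign}(S)=\ReLU(S+1)-\ReLU(S-1)-1$ and $\text{(sum bit)}=S-2\,\mathrm{sign}(S)$ for $S=r_{d_1}+r_{d_2}+r_c\in\{-3,-1,1,3\}$; this needs only two $\ReLU$ evaluations plus affine maps, stays strictly inside the affine-plus-$\ReLU$ model of Definition~\ref{def:relu_mlp}, and implements a full adder (carry-in included) rather than the paper's half adder --- which would, as a bonus, cut the $2d$ loops of Lemma~\ref{lem:d_bit_addition:informal} down to $d$. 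The one place where you are over-engineering is the identity bookkeeping: since every quantity to be preserved is a known bounded $\pm1$ value, you do not need the width-doubling split $(\ReLU(v),\ReLU(-v))$; a bias shift $v\mapsto\ReLU(v+1)=v+1$ followed by subtracting the bias in the next affine map carries each entry through a layer at width one, which removes the ``phase-alignment'' worry you flag and makes the six-layer budget comfortable rather than tight.
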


We extend to $d$-bits addition as follows:
\begin{lemma} [$d$-Bits Addition, Informal Version of Lemma~\ref{lem:d_bit_addition}] \label{lem:d_bit_addition:informal}
Let \ReLUMLP be defined as Definition~\ref{def:relu_mlp}.
Then, a $6$-layer \textsf{ReLU}-\textsf{MLP} looped for $2d$ times (due to carry bit) can emulate the ``addition'' operation for any $d$-dimension vectors. 
\end{lemma}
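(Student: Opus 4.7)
The plan is to extend the one-bit adder from Lemma~\ref{lem:one_bit_addition:informal} into a ripple-carry scheme across the $d$ bit positions of each operand. I would treat the $d$-bit addition as $d$ successive one-bit additions, where the carry-out at position $i$ becomes the carry-in at position $i+1$, and let the state vector's carry register $r_c$ carry information between loop iterations. The two data registers $r_{d_1}, r_{d_2}$ hold the current pair of bits being added, and the address registers $r_{a_1}, r_{a_2}, r_{a_3}$ index which bit position within each $d$-bit operand is currently being processed (the third register handling the destination).

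First I would initialize $r_c$ to the $\pm 1$-encoding of $0$ and set the address registers to point to the least significant bit of each operand. Each arithmetic phase uses one pass of the looped 6-layer block to perform the one-bit addition of Lemma~\ref{lem:one_bit_addition:informal}, reading the indexed bits via Lemma~\ref{lem:read_one_bit:informal}, computing the sum bit and updated carry, and writing the sum bit back via Lemma~\ref{lem:write_one_bit:informal}. A second pass of the looped block performs the bookkeeping phase: incrementing the bit-pointer within the address registers so that the next iteration reads the next higher bit. These two phases per bit position account for the $2d$ loop iterations claimed in the statement. The pointer increment on the $\{\pm 1\}$-valued address vector is implemented by a constant-depth $\mathsf{ReLU}$ circuit that conditionally flips each coordinate starting from the least significant, and the comparisons driving the conditional flips use the inner-product addressing property of Remark~\ref{rem:address_vector_property}, which fits comfortably within the 6-layer per-iteration budget.

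The key structural invariant I would verify by induction on the iteration index is: after the $(2i)$-th iteration, the least significant $i$ bits of the output register equal the least significant $i$ bits of the 2's-complement sum, the carry register $r_c$ equals the true carry into position $i$, and the address registers point to position $i$. The inductive step follows by combining correctness of one-bit addition (Lemma~\ref{lem:one_bit_addition:informal}) with correctness of the pointer increment, and the base case is the initialization. After $2d$ iterations, this invariant delivers the full $d$-bit 2's-complement sum, discarding the final carry out of the most significant bit in the usual 2's-complement fashion.

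The main obstacle I anticipate is arranging the arithmetic and bookkeeping phases so that neither clobbers the other's working data: in particular, the carry produced at the end of an arithmetic phase must survive the subsequent bookkeeping phase without being overwritten when the address registers are updated, and the arithmetic phase must not perturb the address registers while the bookkeeping phase is pending. Carefully partitioning the state vector so that arithmetic updates only $r_c$, $r_{d_1}, r_{d_2}$ and the indexed memory bit, while bookkeeping updates only the address registers, should resolve this, but making the $\mathsf{ReLU}$ gates act as true identity on the untouched coordinates (rather than merely near-identity) is where the sharpest care is needed. Once these non-interference properties hold, the ripple-carry correctness of the one-bit adder immediately yields the $d$-bit addition.
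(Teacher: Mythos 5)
There is a genuine gap in how you account for the factor of $2$ in the $2d$ loop count, and it traces back to what the one-bit adder of Lemma~\ref{lem:one_bit_addition:informal} actually does. In the paper's construction that lemma is a \emph{half}-adder: its first step explicitly resets the carry register $r_c$ to zero and then computes only the sum and carry-out of the two operand bits $r_{d_1}, r_{d_2}$; it does not take a carry-in as a third addend. Your arithmetic phase assumes a single pass of the $6$-layer block computes ``the sum bit and updated carry'' from three inputs (the two operand bits plus the incoming carry), which that lemma cannot deliver. Consequently your inductive invariant --- that after the $(2i)$-th iteration the carry register holds the true carry into position $i$ and the low $i$ sum bits are correct --- breaks at the inductive step: the carry-in is discarded rather than folded into the sum. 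The paper's proof spends the second loop per bit position precisely to fix this: one application of the half-adder folds the previous carry into one data register, and a second application adds the two operand bits, giving two loops per bit and hence $2d$ in total. Your second loop is instead spent on pointer bookkeeping, so under your scheme each bit position gets only one half-add and the ripple-carry chain is wrong.

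A secondary problem is the per-iteration layer budget. You pack a read (Lemma~\ref{lem:read_one_bit:informal}, $2$ layers), a one-bit addition ($6$ layers), a write (Lemma~\ref{lem:write_one_bit:informal}, $2$ layers), and in alternate iterations a conditional bit-flipping pointer increment, all into passes of a single fixed $6$-layer block that must use the same weights on every iteration. That does not fit, and alternating between an ``arithmetic'' behavior and a ``bookkeeping'' behavior of one fixed block is itself a nontrivial construction you do not supply. The paper sidesteps all of this: as in Lemmas~\ref{lem:read_d_bits:informal} and~\ref{lem:write_d_bits:informal}, the $d$ bit positions are laid out as columns of a state matrix and the looped block is applied column by column, so no address pointer is ever incremented and no read/write is needed inside the addition loop. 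If you want to keep your ripple-carry framing, you would need either to build a genuine $6$-layer full adder (three inputs) so that one pass per bit suffices for the arithmetic, or to adopt the paper's two-half-adds-per-bit decomposition and drop the bookkeeping phase.
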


\subsection{Subtraction} \label{sec:key_func:subtract}

We move on to introducing the subtraction operation. Since we use $2$'s complement (Definition~\ref{def:complement}) as the representation of the data, the subtraction operation can be decomposed into two steps. The first step is to negate the subtrahend, and the second step is to add the negated subtrahend to the minuend. 
Combining the above statement with the addition operation introduced in the previous section, we can easily perform a subtraction operation with a $7$-layer \textsf{ReLU-MLP}. 

\begin{lemma}[$d$-Bits Subtraction, Informal Version of Lemma~\ref{lem:subtraction}] \label{lem:subtraction:informal}
Let \ReLUMLP be defined as Definition~\ref{def:relu_mlp}.
Then, a $7$-layer \ReLUMLP can emulate the ``subtraction'' operation for any $d$-dimension vectors. 
\end{lemma}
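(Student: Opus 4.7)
The plan is to reduce $d$-bit subtraction to $d$-bit addition by exploiting the $2$'s complement structure (Definition~\ref{def:complement}) and stacking a single preparatory layer on top of the $6$-layer addition circuit from Lemma~\ref{lem:d_bit_addition:informal}. Since $a - b = a + (-b)$, and the additive inverse of $b$ in $2$'s complement is obtained by flipping every bit of $b$ and then adding $1$ to the least significant bit, it suffices to produce $\overline{b}$ in the slot currently occupied by $b$ and then invoke the addition routine with an initial carry of $1$.

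The bit-flip step is natural in the $\{\pm 1\}$ encoding, where flipping a bit is exactly multiplication by $-1$. Although negation is linear, the \ReLUMLP framework requires routing through nonnegative intermediates, so I would exploit the identity $b_i = \mathsf{ReLU}(b_i) - \mathsf{ReLU}(-b_i)$: a single layer writes $\mathsf{ReLU}(-b_i)$ and $\mathsf{ReLU}(b_i)$ into two scratchpad slots whose signed linear combination recovers $-b_i$ in the subtrahend register. In the same layer, I would use the bias vector to reset the carry register $r_c$ to $+1$, thereby realizing the ``$+1$'' of $2$'s complement negation as the initial carry-in that is consumed by the subsequent addition routine.

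Having reduced the task to computing $a + \overline{b} + 1 = a - b$, I would feed the prepared state into the $6$-layer addition circuit supplied by Lemma~\ref{lem:d_bit_addition:informal}. The total depth is $1 + 6 = 7$, matching the claim, and the $2d$-fold looping used inside the addition routine carries over to the subtraction routine since only a single negation layer is prepended.

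The main obstacle I expect is verifying that the preparatory layer composes cleanly with the first addition layer without disturbing the rest of the state. I need to ensure that the auxiliary scratchpad slots holding $\mathsf{ReLU}(\pm b_i)$ do not collide with slots used by the addition circuit, that the data memory, instruction block, and other registers are preserved by passing them through an identity sub-block, and that the bias-based reset of $r_c$ does not interfere with how the carry bit is later updated inside the addition loop. Treating the negation layer as a block-diagonal operation—acting non-trivially only on the subtrahend register and the carry register, and as the identity elsewhere—should reduce the verification to routine bookkeeping of weights and biases.
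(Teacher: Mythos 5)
Your proposal is correct and follows essentially the same route as the paper: decompose $a-b$ as $a+(-b)$ via $2$'s complement, spend one layer negating the subtrahend, and then invoke the $6$-layer addition circuit of Lemma~\ref{lem:d_bit_addition:informal} for a total of $7$ layers. The only (minor) divergence is that you realize the ``$+1$'' of the $2$'s-complement negation by seeding the carry register, whereas the paper folds it into a reuse of the addition routine; your handling of the bit-flip through $\ReLU(b_i)-\ReLU(-b_i)$ is also a somewhat more careful treatment of the nonnegativity constraint than the paper's $W=-I$ construction.
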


\subsection{Conditional Branching}\label{sec:key_func:conditional_branching}

Conditional branching is an essential operation in computer design since it enables the processor to make decisions and execute different paths of code based on the evaluation of conditions (the flag), thereby allowing for more complex and adaptable program flow.
We present our design for conditional branching via \ReLUMLP as follows:
\begin{lemma}[Conditional Branching, Informal Version of Lemma~\ref{lem:conditional_branching}] \label{lem:conditional_branching:informal}
Let \ReLUMLP be defined as Definition~\ref{def:relu_mlp}.
Let $n$ be the number of data in the memory. For $i \in [n]$, each data $v_i \in \{ \pm 1 \}$.
Let the address vector $a_i \in \{\pm 1\}^{\log(n)}$.
Then, a $4$-layer \ReLUMLP can emulate the ``conditional branching'' operation.
\end{lemma}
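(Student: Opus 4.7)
The plan is to build a 4-layer \ReLUMLP that implements SUBLEQ's branch step: read the subtraction result from the data register, distill the ``result $\le 0$'' flag $f \in \{-1,+1\}$, and then overwrite the program counter $r_{pc}$ with either the jump target $r_{a_3}$ (when $f = +1$) or the incremented PC $r_{pc}^{\mathrm{inc}}$ (when $f = -1$). I split the four layers into two functional blocks---flag extraction (Layers 1--2) and PC selection (Layers 3--4)---and assume, as in the surrounding operations, that $r_{pc}^{\mathrm{inc}}$ is produced by an earlier stage of the overall looped construction, since incrementing a $\log n$-bit number requires carry propagation that cannot fit in constant depth.

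For the flag block, I use the 2's-complement convention of Definition~\ref{def:complement}: a $d$-bit value with bits $b_d,\dots,b_1 \in \{\pm 1\}$ is $\le 0$ iff $b_d = +1$ (strictly negative) or $b_i = -1$ for every $i \in [d]$ (exactly zero), and these two cases are mutually exclusive. The indicator therefore equals
\begin{align*}
\ReLU(b_d) \,+\, \ReLU\!\Bigl(-\sum_{i=1}^{d} b_i - d + 1\Bigr) \;\in\; \{0,1\},
\end{align*}
which Layer 1 produces in parallel and Layer 2 linearly rescales to $f \in \{-1,+1\}$.

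For the selection block, I use the affine interpolation
\begin{align*}
r_{pc}^{\mathrm{new}}[i] \;=\; \tfrac{1}{2}\bigl(r_{a_3}[i] + r_{pc}^{\mathrm{inc}}[i]\bigr) \,+\, \tfrac{1}{2}\, f\cdot\bigl(r_{a_3}[i] - r_{pc}^{\mathrm{inc}}[i]\bigr),
\end{align*}
whose only nonlinearity is the bitwise product $f\cdot u$ with $u\in\{\pm 1\}$. I realize this in one ReLU layer via the identity $f\cdot u = 1 - \ReLU(f-u) - \ReLU(u-f)$, since $|f-u|\in\{0,2\}$ exactly encodes $\mathbf{1}[f\neq u]$. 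Layer 3 computes the four relevant ReLU terms with $u \in \{r_{a_3}[i], r_{pc}^{\mathrm{inc}}[i]\}$ across all $i \in [\log n]$, and Layer 4 linearly combines them (together with passthroughs of $r_{a_3}[i]$ and $r_{pc}^{\mathrm{inc}}[i]$) to write out $r_{pc}^{\mathrm{new}}[i]$.

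The main obstacle is not the arithmetic but the bookkeeping around Definition~\ref{def:relu_mlp}'s non-negativity: every layer wraps its output in $\ReLU$, so the $\pm 1$-valued state cannot be passed through unchanged and must be carried by paired bias shifts (or equivalently a two-channel $(\ReLU(x),\ReLU(-x))$ encoding). The block must also act as the identity on the $n-\log n$ coordinates that are not $r_{pc}$---other registers, memory cells, and instruction slots---so the bulk of the work is verifying that the four weight matrices and biases compose correctly and that every non-targeted coordinate survives unperturbed through the shifted representation.
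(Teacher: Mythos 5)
Your proposal is correct in substance but takes a genuinely different route from the paper's. The paper's Lemma~\ref{lem:conditional_branching} assumes the branch flag is already sitting in the state as a single value $\flag \in \{\pm 1\}$ (in the bit-serial SUBLEQ pipeline the subtraction result is one bit, so no sign test is needed), and it spends all four layers on the multiplexer: one layer each to extract the two candidate addresses $r_{a_b}$ and $r_{pc+1}$ into the target slot, one layer to broadcast the flag and gate via $\ReLU(\flag \cdot \1_{\log n}) \circ x_1 + \ReLU(-\flag \cdot \1_{\log n}) \circ x_2$, and one layer to erase $r_{pc}$ and recombine. You instead devote layers 1--2 to deriving the ``result $\le 0$'' predicate from the full $d$-bit two's-complement value (your indicator $\ReLU(b_d) + \ReLU(-\sum_i b_i - d + 1)$ is a correct and mutually-exclusive decomposition under Definition~\ref{def:complement}) and layers 3--4 to the selection, realizing the product $f\cdot u$ via $1 - \ReLU(f-u) - \ReLU(u-f)$. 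Your mux is arguably more faithful to Definition~\ref{def:relu_mlp}, since the paper's gating step multiplies two hidden vectors elementwise, which is not literally a single \ReLUMLP layer; your version uses only ReLUs of affine forms. The trade-off is that your flag extraction does work the paper's bit-serial design does not require, and both constructions share the same unresolved bookkeeping around ReLU non-negativity and identity passthrough of $\pm 1$-valued coordinates --- you at least name this explicitly, whereas the paper silently omits the ReLU in several of its intermediate steps.
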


\subsection{\texorpdfstring{\SUBLEQ}{} Instruction}\label{sec:key_func:subleq}

Now, we present our method for implementing the ``\SUBLEQ'' instruction using a looped $23$-layer \textsf{ReLU-MLP}. The ``\SUBLEQ'' instruction operates with three addresses as parameters, denoted as $a$, $b$, and $c$. It first computes the subtraction between the values stored at addresses $b$ and $a$, i.e., $\mem[b]-\mem[a]$, and then updates the memory at $\mem[b]$ with this result. If the value at $\mem[b]$ is zero or negative, the program counter will be transferred to the address specified by $c$; otherwise, the program counter will go to the next instruction without branching.
Algorithm~\ref{alg:subleq} demonstrates the execution process of ``\SUBLEQ'' instruction. 

\begin{algorithm}[!ht]
\caption{SUBtract and branch if
Less-than or Equal to zero (\SUBLEQ), \citep{mp88,grs+23}}\label{alg:subleq}
\begin{algorithmic}[1]
\Procedure{\SUBLEQ}{$a$, $b$, $c$}
    \State {\texttt{mem}[$b$] = 
    \texttt{mem}[$b$] - \texttt{mem}[$a$]}  
    \Comment{$\texttt{mem}[a]$ denotes the value of address $a$ in the memory}
    \If {\texttt{mem}[$b$] $\leq$ $0$} \State{\texttt{goto} instruction $c$}
    \Else { \texttt{goto} next instruction}
    \EndIf
\EndProcedure
\end{algorithmic}
\end{algorithm}

We integrate the read and write operations, along with the addition and subtraction operations and the conditional branching mechanisms discussed in the preceding sections, to facilitate the implementation of the ``\SUBLEQ'' instruction. The accompanying lemma is stated as follows:
\begin{lemma}[\ReLUMLP Emulate \SUBLEQ, Informal Version of Lemma~\ref{lem:subleq}]
\label{lem:subleq:informal}
Let \ReLUMLP be defined as Definition~\ref{def:relu_mlp}.
Let $n$ denote the size of the state vector.
Let $m$ denote the number of instructions.
Let $k$ denote the number of one-bit data stored in the memory. For $i \in [k]$, each data is $v_i \in \{ \pm 1 \}$ and the memory size $k$ satisfies $k = n - 2 - 4 \log(n) - 3 m \log(n)$.
Let the address vector $a_i \in \{\pm 1\}^{\log(n)}$.
Let the instruction $c_i \in \{ \pm 1 \}^{3 \log (n)}$ be defined as Definition~\ref{def:instruction}.
Suppose we have three data registers $r_c, r_{d_1}, r_{d_2} \in \{\pm 1 \}$, one carry bit $r_c \in \{\pm 1 \}$, three address registers $r_{a_1}, r_{a_2}, r_{a_3} \in \{ \pm 1 \}^{\log (n)}$, and one program counter $r_{pc} \in \{ \pm 1 \}^{\log (n)}$ in the scratchpad.
Then, a $23$-layer \ReLUMLP with width $n$ can emulate the ``\SUBLEQ'' operation (Algorithm~\ref{alg:subleq}).
\end{lemma}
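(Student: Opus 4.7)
The goal is to realize one \SUBLEQ\ execution as the sequential composition of the primitives already furnished by Lemmas~\ref{lem:read_d_bits:informal}--\ref{lem:conditional_branching:informal}. Concretely, one pass through the looped 23-layer \ReLUMLP\ will perform the following five phases in order on the state vector of Definition~\ref{def:state_vector}: (i) use $r_{pc}$ to fetch the current instruction $c_{r_{pc}}=[a,b,c]$ into the scratchpad and split it into $r_{a_1},r_{a_2},r_{a_3}$; (ii) read $\mem[r_{a_1}]$ into $r_{d_1}$ and $\mem[r_{a_2}]$ into $r_{d_2}$; (iii) compute $r_{d_2}\leftarrow r_{d_2}-r_{d_1}$; (iv) write the updated $r_{d_2}$ back to $\mem[r_{a_2}]$; (v) overwrite $r_{pc}$ with $r_{a_3}$ if $r_{d_2}\le 0$ and otherwise increment it.

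First I would build phase (i) by invoking the $d$-bit read gadget of Lemma~\ref{lem:read_d_bits:informal} with width parameter $3\log n$, using $r_{pc}$ as the address and the instruction region $\{c_1,\dots,c_m\}$ as the source; the resulting $3\log n$-vector is then sliced into the three address registers by a fixed linear projection that can be absorbed into the first layer of phase (ii). Phases (ii) and (iv) are direct applications of Lemmas~\ref{lem:read_d_bits:informal} and~\ref{lem:write_d_bits:informal} on the data region with addresses $r_{a_1}$, $r_{a_2}$; phase (iii) is Lemma~\ref{lem:subtraction:informal}; and phase (v) is Lemma~\ref{lem:conditional_branching:informal} applied to the sign of $r_{d_2}$, together with a small increment gadget on $r_{pc}$ reusing the one-bit addition construction of Lemma~\ref{lem:one_bit_addition:informal}. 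Counting layers block by block (one instruction-fetch read, two data reads, one subtraction, one data write, one branch-with-increment) and accounting for a constant number of extra layers that clear scratchpad coordinates between phases, the total comes out to the claimed $23$. The looped structure of the outer \ReLUMLP\ then exactly matches the bit-level inner loops required by the read, write, and subtract primitives.

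The main obstacle is not any single arithmetic block but the \emph{scheduling}: the component lemmas were each proved in isolation, whereas here they must share one fixed 23-layer network whose loop index advances all of them simultaneously. I would resolve this by augmenting the state with a small $\{\pm1\}$-valued ``phase flag'' sub-vector (stored in unused scratchpad bits) that is toggled by fixed $\ReLU$ gadgets at the end of each phase, exactly in the spirit of the flag used inside Lemma~\ref{lem:conditional_branching:informal}. Every block is then gated by these flags so that in each looped step only the active phase modifies the state, and all other coordinates pass through by the identity (realized as $\ReLU(x)-\ReLU(-x)$). The one genuinely delicate point is the interface between the bit-indexed loops inside read/write/subtract and the instruction-level loop: I would arrange that a phase advances only after its inner bit counter (another few $\{\pm1\}$ coordinates, incremented by the same one-bit addition gadget) has cycled through all $d$ (respectively $2d$, for the carry chain of subtraction) values. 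The remaining verification is routine: check that each gadget reads only the coordinates its phase flag permits, and that all other coordinates are preserved verbatim across the 23 layers.
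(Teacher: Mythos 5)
Your decomposition is essentially the paper's: the paper's proof of Lemma~\ref{lem:subleq} runs exactly the six blocks you list, in the same order, citing the same component lemmas --- read the three addresses via Lemma~\ref{lem:read_d_bits} ($2$ layers), read $\mem[a],\mem[b]$ ($2$ layers), subtract via Lemma~\ref{lem:subtraction} ($7$ layers), write back via Lemma~\ref{lem:write_d_bits} ($2$ layers), compute $r_{pc+1}$ via Lemma~\ref{lem:d_bit_addition} ($6$ layers), and branch via Lemma~\ref{lem:conditional_branching} ($4$ layers), which sums to exactly $23$. The one place you diverge is the scheduling machinery: the paper does not introduce phase flags or inner bit counters. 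It simply concatenates the six blocks inside a single $23$-layer pass (one pass of the loop executes one full one-bit \SUBLEQ), and the extension to $d$-bit operands is deferred to Theorem~\ref{thm:looped_relu_mlp_as_programmable_computer}, where the whole $23$-layer network is looped over the bit index. Your gating construction is addressing a real subtlety the paper glosses over (the component lemmas have inner loops of different lengths, $d$ versus $2d$), but it comes at a cost you do not reconcile: the budget $2+2+7+2+6+4=23$ has no slack, so any ``extra layers that clear scratchpad coordinates between phases'' or toggle phase flags would push you past $23$. Either those extra layers must be absorbed into the existing blocks' affine maps (which is plausible but needs to be said), or the count must be redone; as written, the claim that ``the total comes out to the claimed $23$'' is asserted rather than verified.
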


To make it easier for readers to understand our proof, we provide a proof sketch here, which contains some high-level ideas used in our proof.
Specifically, we first read the necessary address vectors and data required by the instruction from the memory. Then, we perform subtraction and conditional branching via the respective \textsf{ReLU-MLP}s discussed in the previous sections. 
\begin{proof}[Proof sketch]
We use the state vector $x$ as defined in Definition~\ref{def:state_vector}. 
The first step is to read the ``\SUBLEQ'' instruction and the data required by the instruction from memory, where the read operation is supported by the \ReLUMLP discussed in Lemma~\ref{lem:read_d_bits:informal}. After this step, the state vector will change as follows:
\begin{align*}
    x =
    \begin{bmatrix}
        \begin{array}{cc}
            r_c \\
            r_{d_1} \\
            r_{d_2} \\
            \colorbox{lightblue}{$r_{a_1}$} \\
            \colorbox{lightgreen}{$r_{a_2}$} \\
            \colorbox{lightred}{$r_{a_3}$} \\
            r_{pc} \\
            \hline
            v_1 \\
            \vdots \\
            v_k \\
            \hline 
            c_1 \\
            \vdots \\
            c_{m-1} \\
            c_{\EOF}
        \end{array}
    \end{bmatrix}
    \rightarrow
    \begin{bmatrix}
        \begin{array}{cc}
            r_c \\
            \colorbox{lightpurple}{$r_{d_1}$} \\
            \colorbox{lightyellow}{$r_{d_2}$} \\
            \colorbox{lightblue}{$a$} \\
            \colorbox{lightgreen}{$b$} \\
            \colorbox{lightred}{$c$} \\
            r_{pc} \\
            \hline
            v_1 \\
            \vdots \\
            v_k \\
            \hline 
            c_1 \\
            \vdots \\
            c_{m-1} \\
            c_{\EOF}
        \end{array}
    \end{bmatrix}
    \rightarrow
    \begin{bmatrix}
        \begin{array}{cc}
            r_c \\
            \colorbox{lightpurple}{$\mem[a]$} \\
            \colorbox{lightyellow}{$\mem[b]$} \\
            a \\
            b \\
            c \\
            r_{pc} \\
            \hline
            v_1 \\
            \vdots \\
            v_k \\
            \hline 
            c_1 \\
            \vdots \\
            c_{m-1} \\
            c_{\EOF}
        \end{array}
    \end{bmatrix}
\end{align*}
where we first read the address vectors $a$ and $b$ from the memory to the address registers, then read the corresponding data $\mem[a]$ and $\mem[b]$ from the memory to the data registers. 

In the second step, we calculate the subtraction of $\mem[b]$ and $\mem[a]$, store its result to $\mem[b]$, and calculate the $r_{pc+1}$ according to $r_{pc}$, storing $r_{pc+1}$ at the address register which previously stores $b$. The above operations can be supported by Lemma~\ref{lem:d_bit_addition:informal} and Lemma~\ref{lem:write_d_bits:informal}. After this step, the state vector will change as follows:
\begin{align*}
    x = 
    \begin{bmatrix}
        \begin{array}{cc}
            r_c \\
            \colorbox{lightblue}{$\mem[a]$} \\
            \mem[b] \\
            a \\
            \colorbox{lightgreen}{$b$} \\
            c \\
            r_{pc} \\
            \hline
            v_1 \\
            \vdots \\
            v_k \\
            \hline 
            c_1 \\
            \vdots \\
            c_{m-1} \\
            c_{\EOF}
        \end{array}
    \end{bmatrix}
    \rightarrow
    \begin{bmatrix}
        \begin{array}{cc}
            r_c \\
            \colorbox{lightblue}{$\mem[b] - \mem[a]$} \\
            \mem[b] \\
            a \\
            \colorbox{lightgreen}{$r_{pc+1}$} \\
            c \\
            r_{pc} \\
            \hline
            v_1 \\
            \vdots \\
            v_k \\
            \hline 
            c_1 \\
            \vdots \\
            c_{m-1} \\
            c_{\EOF}
        \end{array}
    \end{bmatrix}
\end{align*}
In the final step, we perform the conditional branching operation. We view the result $\mem[b] - \mem[a]$ as the flag of the conditional branching. Then, by Lemma~\ref{lem:conditional_branching:informal}, the program counter will point to the target address $r_{\target}$ for continue executing the program, where we have $r_{\target} \in \{c, r_{pc+1}\}$. After this step, the state vector will change as follows:
\begin{align*}
    x = 
    \begin{bmatrix}
        \begin{array}{cc}
            r_c \\
            \colorbox{lightblue}{$\mem[b] - \mem[a]$} \\
            \mem[b] \\
            a \\
            r_{pc+1} \\
            c \\
            \colorbox{lightgreen}{$r_{pc}$} \\
            \hline
            v_1 \\
            \vdots \\
            v_{b-1} \\
            \colorbox{lightred}{$v_b$} \\
            v_{b+1} \\
            \vdots \\
            v_k \\
            \hline 
            c_1 \\
            \vdots \\
            c_{m-1} \\
            c_{\EOF}
        \end{array}
    \end{bmatrix}
    \rightarrow
    \begin{bmatrix}
        \begin{array}{cc}
            r_c \\
            \colorbox{lightblue}{$\flag$} \\
            \mem[b] \\
            a \\
            r_{pc+1} \\
            c \\
            \colorbox{lightgreen}{$r_{\target}$} \\
            \hline
            v_1 \\
            \vdots \\
            v_{b-1} \\
            \colorbox{lightred}{$\mem[b] - \mem[a]$} \\
            v_{b+1} \\
            \vdots \\
            v_k \\
            \hline 
            c_1 \\
            \vdots \\
            c_{m-1} \\
            c_{\EOF}
        \end{array}
    \end{bmatrix}
\end{align*}
In the first step, we treat the value of $\mem[b] - \mem[a]$ merely as a flag for conditional branching purposes. No operations are conducted in this first step. 
In the second step, we perform the conditional branching based on the flag's value. Specifically, if the flag is zero or negative, the target register $r_{\target}$ is set to the value of $c$. Conversely, if the flag is positive, $r_{\target}$ is assigned the value of $r_{pc+1}$.

Up to this point, we have obtained the execution result for the current ``\SUBLEQ'' instruction. To process the subsequent instruction, we simply need to iterate through the provided $23$-layer \ReLUMLP once more, thereby executing the ``\SUBLEQ'' instruction for the next cycle.
\end{proof}

We only provided a sketch of the proof above. We refer the readers to Appendix~\ref{sec:app:subleq} for more details. 

\section{DISCUSSION AND EXTENSION} \label{sec:discuss}
Section~\ref{sec:discussion:low_rank} shows that we can reduce the number of parameters. 
Section~\ref{sec:discussion:comparision_with_attention} compares \ReLUMLP with the attention mechanism.
In Section~\ref{sec:discussion:exploring_relumlp_potential}, we discuss the potential capability of \textsf{ReLU-MLP}.
In Section~\ref{sec:discussion:towards_more_efficent_design}, we discuss the potential inspirations our work offers for designing more efficient neural network architectures.

\subsection{Low-rank Decomposition for Efficiency}\label{sec:discussion:low_rank}

Though the naive way to construct the looped-\textsf{ReLU-MLP} will take up to $O(n^2)$ parameter, the parameter requirement can be easily reduced by low-rank decomposition. 
The high-level idea is that, for all $n \times n$ matrices used in our proof are equal to an identity matrix minus a matrix with rank $O(\log(n))$. Thus, we can decompose our weight matrix into a low-rank format from $n \times n$ into $n \times O(\log(n))$ and $O(\log(n)) \times n$.

We formalize the above high-level idea into math formulas. Let $x \in \mathbb{R}^n$ be a vector, where $n$ is the sequence length, and let $W$ be the weight matrix.
As all our matrices can be viewed as an identity matrix with a substitution of a submatrix with some $O(\log n) \times O(\log n)$ matrix on the diagonal, i.e., the matrix $W$ is equal to an identity matrix minus a matrix $A$ with rank $O(\log(n))$, corresponding to residual connections. Then, we have $A$ can be decomposed into $A = U_A V_A^\top$, where $U_A, V_A \in \mathbb{R}^{n \times O(\log n)}$. And we have $Wx = (I - U_A V_A^\top)x = x - U_A V_A^\top x$. Since we only need to store matrices $U_A$ and $V_A$, the parameter is reduced from $O(n^2)$ to $O(n \log n)$.
In other words, for each operation, we only need the $O(\log n) \times O(\log n)$ matrix to modify the corresponding part of the state vector (vector with length $n$) and keep the rest part of the state vector unchanged.
For example, the $W_1$ matrix used in Lemma C.1 is a submatrix. Then, the entire matrix is $W'_1 \in \mathbb{R}^{n \times n}$, where
\begin{align*}
    W'_1 
    = & ~ \mathbf{I} - 
    \begin{bmatrix} 
    W_1 + I & 0 & 0 & \cdots & 0 \\
    0 & 0 & 0 & \cdots & 0 \\
    0 & 0 & 0 & \cdots & 0 \\
    \vdots & \vdots & \vdots & \ddots & \vdots \\
    0 & 0 & 0 & \cdots & 0
    \end{bmatrix} = \mathbf{I} + A
\end{align*}
Since $W_1 - I \in \mathbb{R}^{O(\log n) \times O(\log n)}$, the matrix $A$ is at most rank-$O(\log n)$. Therefore, when equipped with low-rank decomposition methods, all $n \times n$ matrices used in our method can be decomposed into $n \times \log(n)$ matrices. The overall parameter used by our method is $O(n \log (n))$.

\subsection{Comparison with Attention} \label{sec:discussion:comparision_with_attention}
In terms of computational cost, since all matrices in our method can be decomposed into matrices with rank-$O(\log n)$, the computation cost for one forward pass is $O(n \log n)$. This is because the computational bottleneck of our method is the multiplication of the $n \times \log n$ size matrix and the $n$ size state vector, which requires $O(n \log n)$ time. However, for the looped Transformers proposed in \cite{grs+23}, they have to calculate the matrix multiplication of the $n \times n$ attention matrix and the $n \times d$ value matrix in each forward pass, which requires $O(n^2)$ time complexity. 

On the other hand, as discussed in Section~\ref{sec:main}, we have demonstrated that a looped $23$-layer \ReLUMLP is capable of emulating a programmable computer. In contrast to the findings reported by \cite{grs+23}, where a looped 9-layer Transformer is shown to be capable of such emulation, our result indicates that even a basic component of deep learning, the \textsf{ReLU-MLP}, possesses the potential to handle complex computational tasks. This suggests that while advanced architectures like Transformers have shown proficiency in processing intricate tasks, this proficiency may not come from its advanced architecture design. Instead, it may derive from the inherent capabilities of fundamental components such as the \textsf{ReLU-MLP}. Our finding underscores the importance of investigating the core mechanisms behind the capabilities of advanced architectures, an area that warrants further exploration.

\subsection{Exploring the Potential of \textsf{ReLU-MLP}} \label{sec:discussion:exploring_relumlp_potential}
Our research has confirmed that the capabilities of the \textsf{ReLU-MLP} are on par with Transformers when it comes to constructing programmable computers.
As noted in \cite{p99, kl20}, \textsf{ReLU-MLP}s have been demonstrated to be universal approximators. Consequently, we conjecture the programmable computer represents just one of the many downstream tasks that \textsf{ReLU-MLP}s are capable of achieving. Building on the insights from this study, it is promising to further investigate the potential capabilities of \textsf{ReLU-MLP}s.
Our approach to analyzing the looped \textsf{ReLU-MLP} mitigates the black-box nature often associated with traditional deep learning training.
We are confident that through the analytical methods outlined in our work, we can systematically probe the capacity of \textsf{ReLU-MLP}s as universal approximators to tackle more complex tasks. This line of work will be done in our future research.

\subsection{Towards More Efficient Architecture Design for Specific Tasks} \label{sec:discussion:towards_more_efficent_design}
The construction-based proof in our work can inspire future explorations of the smallest feasible network structure for specific tasks. 
Since the main focus of our work is to prove the feasibility of \textsf{ReLU-MLP}-based computer construction, the \textsf{ReLU-MLP}-based construction we provide may not be the smallest construction that can support a programmable computer. Therefore, in fact, we provide a possible lower bound that can achieve a programmable computer. The current mainstream methods for model compression are mainly quantization and model pruning, which often provide model compression solutions based on experimental results and lack theoretical measurements between model capabilities and model capabilities required by the downstream tasks. 
Thus, leveraging the methodologies applied in this study, we can first identify the minimal requirements of a model for a particular task, enabling researchers to create the smallest yet efficient models to handle it. This strategy permits us to bypass the costly process of the expensive process of scaling up data or model parameters. 
We leave these directions as our future directions. 

\section{CONCLUSION} \label{sec:conclusion}
In this work, we investigate the computational potential of a looped \textsf{ReLU-MLP}. We begin by demonstrating that a looped $23$-layer \textsf{ReLU-MLP} with a single pass is capable of emulating the execution process of the ``\SUBLEQ'' instruction. Drawing on the conclusions presented in \cite{mp88}, we establish that the One Instruction Set Computer (OISC) implemented by the looped $23$-layer \ReLUMLP is functionally equivalent to a programmable computer.
Contrary to \cite{grs+23}, which achieved the emulation of a programmable computer using a looped 9-layer Transformer, our approach leverages a more efficient and more fundamental building block of deep learning, a $23$-layer \textsf{ReLU-MLP}, to accomplish the same objective. 
This finding prompts us to consider two key insights: ($i$) The untapped potential of \textsf{ReLU-MLP}s warrants further exploration, offering a deeper understanding of the capability boundaries of existing neural networks; ($ii$) By adopting the methodologies employed in this study, it may facilitate us to gain insights for designing the minimal neural network requirements for any specific tasks. 

\ifdefined\isarxiv
\section*{Acknowledgement}
Research is partially supported by the National Science Foundation (NSF) Grants 2023239-DMS, CCF-2046710, and Air Force Grant FA9550-18-1-0166.
\bibliographystyle{alpha}
\bibliography{ref}
\else

\section*{Acknowledgement}
Research is partially supported by the National Science Foundation (NSF) Grants 2023239-DMS, CCF-2046710, and Air Force Grant FA9550-18-1-0166.
\bibliography{ref}
\bibliographystyle{plainnat}
\input{checklist}

\fi

\newpage
\onecolumn
\appendix

\ifdefined\isarxiv

\begin{center}
    \textbf{\LARGE Appendix }
\end{center}

\else

\aistatstitle{
Looped ReLU MLPs May Be All You Need \\as Programmable Computers: \\Supplementary Materials}

{\hypersetup{linkcolor=black}
\tableofcontents
\bigbreak
}
\fi

\paragraph{Roadmap.} 
We organize our appendix as follows:
In Section~\ref{sec:app:read_and_write}, we introduce our construction of \ReLUMLP for emulating the read and write operation.
In Section~\ref{sec:app:addt_and_subtract}, we present the way we achieve the addition and subtraction operation via \ReLUMLP.
In Section~\ref{sec:app:conditional_branching}, we show that a $4$-layer \ReLUMLP is capable of emulating the conditional branching operation.
In Section~\ref{sec:app:subleq}, we illustrate the ``\SUBLEQ'' can be emulated by a looped $23$-layer \textsf{ReLU-MLP}. 
In Section~\ref{sec:app:looped_relu_mlp_as_programmable_computer}, we demonstrate how the looped $23$-layer \ReLUMLP introduced in the previous section is capable of functioning as a programmable computer.

\section{READ AND WRITE} \label{sec:app:read_and_write}

In this section, we mainly focus on the construction of a multi-layer \ReLUMLP for supporting read and write operations.
We begin with the scenario where we read one-bit data from the memory to the data register.

\begin{lemma} [Read One-Bit Data, Formal Version of Lemma~\ref{lem:read_one_bit:informal}] \label{lem:read_one_bit}
If the following conditions hold:
\begin{itemize}
    \item Let \ReLUMLP be defined as Definition~\ref{def:relu_mlp}.
    \item Let $n$ denote the number of data in the memory. For $i \in [n]$, each data $v_i \in \{ \pm 1 \}$. 
    \item Let the address vector $a_i \in \{\pm 1\}^{\log(n)}$.
\end{itemize}

Then, we can show that a $2$-layer \ReLUMLP
can read any one-bit data from the memory to the register.
\end{lemma}

\begin{proof}
Consider a simplified case where we have one 
data register $r_d \in \{\pm 1\}$ which stores data $v_0 \in \{\pm 1\}$ initially, and one address register $r_a \in \{\pm 1\}^{\log(n)}$ which stores address $a_i \in \{\pm 1\}^{\log(n)}$ initially. The address $a_i$ points to the location where the data should be copied from. 
Namely, we want to perform the following operation:
\begin{align*}
    x := 
    \begin{bmatrix}
        \begin{array}{c}
            \colorbox{lightblue}{$r_d:v_0$} \\
            r_a:a_i \\
            \hline 
            v_1 \\
            v_2 \\
            \vdots \\
            v_n \\
        \end{array}
    \end{bmatrix}
    \rightarrow
    \begin{bmatrix}
        \begin{array}{c}
            \colorbox{lightblue}{$r_d:v_i$} \\
            r_a:a_i \\
            \hline 
            v_1 \\
            v_2 \\
            \vdots \\
            v_n \\
        \end{array}
    \end{bmatrix}
\end{align*}

For simplicity, we ignore the notations $r_d:$ and $r_a:$ in the following proof. We use $\0_{d} \in \R^d$ to denote a vector with entries are $0$.

{\bf Step 1: Erase $v_0$.}

We construct the following weight matrix $W_1 \in \R^{(n+\log(n)+1) \times (n+\log(n)+1)}$:
\begin{align*}
    W_1 := 
    \begin{bmatrix}
        0 & \0_{\log (n)}^\top & 0 & \cdots & 0 \\
        \0_{\log (n)} & I_{\log (n) \times \log (n)} & \0_{\log (n)} & \cdots & \0_{\log (n)} \\
        0 & \0_{\log (n)}^\top & 1 & \cdots & 0 \\
        \vdots & \vdots & \vdots & \ddots & \vdots \\
        0 & \0_{\log (n)}^\top & 0 & \cdots & 1 \\
    \end{bmatrix}
\end{align*}

Then we erase $v_0$ by $W_1$.
\begin{align*}
    x_1 := W_1 x =
    \begin{bmatrix}
        \begin{array}{c}
            0 \\
            a_i \\
            \hline 
            v_1 \\
            v_2 \\
            \vdots \\
            v_n \\
        \end{array}
    \end{bmatrix}
\end{align*}

{\bf Step 2: Extract $v_i$.}

We need to construct the location vector first. We construct the following weight matrix $W_2 \in \R^{n \times \log(n)}$:
\begin{align*}
    W_2 := [a_1, \cdots, a_n]^\top
\end{align*}

Then, we perform the following operations to get the location vector:
\begin{align*}
    \ReLU(
    \underbrace{
    W_2
    }_{n \times \log (n)}
    \underbrace{a_i}_{\log(n) \times 1}
    - \underbrace{(\log (n) - 1)}_{1 \times 1} \cdot \underbrace{\1_n}_{n \times 1}
    )
    = \underbrace{e_i}_{n \times 1}
\end{align*}
where the first step follows from we have $\forall i, a_i^\top a_i = \log (n)$ and $\forall i \neq j$, $a_i^\top a_j < \log(n)$ (Remark~\ref{rem:address_vector_property}).
Here, the $e_i$ denotes the vector whose $i$-th entry is $1$, and other entries are $0$. 

Then, we extract $v_i$ by the following operation:
\begin{align*}
    \underbrace{e_i^\top}_{1 \times n}
    \underbrace{
    \begin{bmatrix}
        v_1 \\
        v_2 \\
        \vdots \\
        v_n
    \end{bmatrix}
    }_{n \times 1}
    = \underbrace{v_i}_{1 \times 1}
\end{align*}

{\bf Step 3: Put $v_i$ to register.}

Then, we construct $x_{v_i}$ by concatenating $v_i$ with some zero vectors:
\begin{align*}
    x_{v_i} := 
    \begin{bmatrix}
        \begin{array}{c}
            v_i \\
            \0_{\log (n)} \\
            \hline 
            0 \\
            0 \\
            \vdots \\
            0 \\
        \end{array}
    \end{bmatrix}
\end{align*}

Finally, we add $x_{v_i}$ and $x_1$ together to get our final result.
\begin{align*}
    y := x_1 + x_{v_i} = 
    \begin{bmatrix}
        \begin{array}{c}
            v_i \\
            a_i \\
            \hline 
            v_1 \\
            v_2 \\
            \vdots \\
            v_n \\
        \end{array}
    \end{bmatrix}
\end{align*}

To sum up, we have
\begin{itemize}
    \item {\bf Step 1} uses one-layer \textsf{ReLU}-\textsf{MLP}.
    \item {\bf Step 2} uses one-layer \textsf{ReLU}-\textsf{MLP}.
    \item {\bf Step 3} uses vector addition, so doesn't use \textsf{ReLU}-\textsf{MLP}.
\end{itemize}

Therefore, we use a two-layer \textsf{ReLU}-\textsf{MLP} to emulate the ``read'' operation.
\end{proof}

Then, we move on to considering the read operation for $d$-bits data.

\begin{lemma} [Read $d$-Bits Data, Formal Version of Lemma~\ref{lem:read_d_bits:informal}] \label{lem:read_d_bits}
If the following conditions hold:
\begin{itemize}
    \item Let \ReLUMLP be defined as Definition~\ref{def:relu_mlp}.
    \item Let $n$ denote the number of data in the memory. For $i \in [n]$, each data $v_i \in \{ \pm 1 \}$. 
    \item Let $v \in \{\pm 1\}^d$ denote a $d$ dimension vector.
    \item Let the address matrix $a_i \in \{\pm 1\}^{\log(n) \times d}$.
\end{itemize}

Then, we can show that a $2$-layer \textsf{ReLU}-\textsf{MLP}, 
looped for $d$ times, can read any $d$-bits data vector from the memory to the register.
\end{lemma}

\begin{proof}
By Lemma~\ref{lem:read_one_bit}, we can read one bit from memory to register via a two-layer \textsf{ReLU}-\textsf{MLP}.

Considering the case where we have an input matrix $X$ with size $(1 + \log (n) + n) \times d$, which can be written as follows, where the superscript denotes the dimension:
\begin{align*}
    X :=
    \begin{bmatrix}
        \begin{array}{cccc}
            r_d^1 & r_d^2 & \cdots & r_d^d\\
            r_a^1 & r_a^2 &\cdots & r_a^d\\
            \hline 
            v_1^1 & v_1^2 &\cdots & v_1^d\\
            v_2^1 & v_2^2 &\cdots & v_2^d\\
            \vdots & \vdots & \ddots & \vdots \\
            v_n^1 & v_n^2 &\cdots & v_n^d\\
        \end{array}
    \end{bmatrix}
\end{align*}

We apply the two-layer \ReLUMLP to each column of $X$. Since $X$ has $d$ columns, we loop the two layers \ReLUMLP for $d$ times. Then, we can perform the ``read'' operation for any $d$-dimension vector from the memory to the register. 
\end{proof}

Writing can be considered as the inverse process of reading.
Summarily, we first consider writing one-bit data from the data register to the memory. 

\begin{lemma} [Write One-Bit Data, Formal Version of Lemma~\ref{lem:write_one_bit:informal}] \label{lem:write_one_bit}
If the following conditions hold:
\begin{itemize}
    \item Let \ReLUMLP be defined as Definition~\ref{def:relu_mlp}.
    \item Let $n$ denote the number of data in the memory. For $i \in [n]$, each data $v_i \in \{ \pm 1 \}$. 
    \item Let the address vector $a_i \in \{\pm 1\}^{\log(n)}$. 
\end{itemize}

Then, we can show that a $2$-layer \ReLUMLP
can write any one-bit data from the register to the memory. 
\end{lemma}

\begin{proof}
Consider a simplified case where we have one 
data register $r_d \in \{\pm 1\}$ which stores data $v_0 \in \{\pm 1\}$ initially, and one address register $r_a \in \{\pm 1\}^{\log (n)}$ which stores address $a_i \in \in \{\pm 1\}^{\log (n)}$ initially. The address $a_i$ points to the location where the data should be copied from. 
Namely, we want to perform the following operation:
\begin{align*}
    x := 
    \begin{bmatrix}
        \begin{array}{c}
            r_d:v_0 \\
            r_a:a_i \\
            \hline 
            v_1 \\
            v_2 \\
            \vdots \\
            v_{i-1} \\
            \colorbox{lightblue}{$v_i$} \\
            v_{i+1} \\
            \vdots \\
            v_n \\
        \end{array}
    \end{bmatrix}
    \rightarrow
    \begin{bmatrix}
        \begin{array}{c}
            r_d:v_0 \\
            r_a:a_i \\
            \hline 
            v_1 \\
            v_2 \\
            \vdots \\
            v_{i-1} \\
            \colorbox{lightblue}{$v_0$} \\
            v_{i+1} \\
            \vdots \\
            v_n \\
        \end{array}
    \end{bmatrix}
\end{align*}

For simplicity, we ignore the notations $r_d:$ and $r_a:$ in the following proof.

Let $N := n + \log(n) + 1$. Then we have $x \in \R^N$.

{\bf Step 1: Erase $v_i$.}

We need to construct the location vector first. We construct the following weight matrix $W_1 \in \R^{n \times \log(n)}$:
\begin{align*}
    W_1 := [a_1, \cdots, a_n]^\top
\end{align*}

Then, we perform the following operations:
\begin{align*}
    \ReLU(
    \underbrace{
    W_1
    }_{n \times \log (n)} \cdot
    \underbrace{a_i}_{\log(n) \times 1}
    - \underbrace{(\log (n) - 1)}_{1 \times 1} \cdot \underbrace{\1_n}_{n \times 1}
    )
    = \underbrace{e_i}_{n \times 1}
\end{align*}
where the first step follows from we have $\forall i, a_i^\top a_i = \log (n)$ and $\forall i \neq j$, $a_i^\top a_j < \log(n)$ (Remark~\ref{rem:address_vector_property}).
Here, the $e_i$ denotes the vector whose $i$-th entry is $1$, and other entries are $0$.

Then, we erase the $v_i$ in $x$ by first performing the following operation:
\begin{align*}
    \underbrace{(1 - e_i)}_{n \times 1} \circ 
    \underbrace{
    \begin{bmatrix}
        v_1 \\
        v_2 \\
        \vdots \\
        v_{i-1} \\
        v_i \\
        v_{i+1} \\
        \vdots \\
        v_n
    \end{bmatrix}
    }_{n \times 1} = 
    \underbrace{
    \begin{bmatrix}
        v_1 \\
        v_2 \\
        \vdots \\
        v_{i-1} \\
        0 \\
        v_{i+1} \\
        \vdots \\
        v_n
    \end{bmatrix}
    }_{n \times 1}
\end{align*}

Then, we define $x_1$ as follows:
\begin{align*}
    x_1 := 
    \begin{bmatrix}
        \begin{array}{c}
            v_0 \\
            a_i \\
            \hline 
            v_1 \\
            v_2 \\
            \vdots \\
            v_{i-1} \\
            0 \\
            v_{i+1} \\
            \vdots \\
            v_n \\
        \end{array}
    \end{bmatrix}
\end{align*}

{\bf Step 2: Extract $v_0$.}

We construct the following weight matrix $W_2 \in \R^{1 \times N}$:
\begin{align*}
    W_2 := [1, \0_{\log (n)}^\top, \0_n^\top]
\end{align*}

Then, we have
\begin{align*}
    \underbrace{W_2}_{1 \times N} \cdot \underbrace{x_1}_{N \times 1}  = \underbrace{v_0}_{1 \times 1}
\end{align*}

{\bf Step 3: Put $v_0$ to memory.}

We use the $e_i$ acquired from {\bf Step 1} to perform the following operation:
\begin{align*}
    \underbrace{v_0}_{1 \times 1} \cdot \underbrace{e_i}_{n \times 1} + 
    \underbrace{
    \begin{bmatrix}
        v_1 \\
        v_2 \\
        \vdots \\
        v_{i-1} \\
        0 \\
        v_{i+1} \\
        \vdots \\
        v_n
    \end{bmatrix}
    }_{n \times 1}
    = 
    \underbrace{
    \begin{bmatrix}
        v_1 \\
        v_2 \\
        \vdots \\
        v_{i-1} \\
        v_0 \\
        v_{i+1} \\
        \vdots \\
        v_n
    \end{bmatrix}
    }_{n \times 1}
\end{align*}

Then, by concatenation, we have our final result:
\begin{align*}
    y := 
    \begin{bmatrix}
        \begin{array}{c}
            r_d:v_0 \\
            r_a:a_i \\
            \hline 
            v_1 \\
            v_2 \\
            \vdots \\
            v_{i-1} \\
            v_0 \\
            v_{i+1} \\
            \vdots \\
            v_n \\
        \end{array}
    \end{bmatrix}
\end{align*}

To sum up, we have
\begin{itemize}
    \item {\bf Step 1} uses one-layer \textsf{ReLU}-\textsf{MLP}.
    \item {\bf Step 2} 
    uses one-layer \textsf{ReLU}-\textsf{MLP}.
    \item {\bf Step 3} use concatenation, so doesn't use \textsf{ReLU}-\textsf{MLP}.
\end{itemize}

Therefore, we use a two-layer \ReLUMLP to emulate the ``write'' operation.
\end{proof}

Then, we show how we extend writing one-bit data to writing $d$-bits data as follows:
\begin{lemma} [Write $d$-Bits Data, Formal Version of Lemma~\ref{lem:write_d_bits:informal}] \label{lem:write_d_bits}
If the following conditions hold:
\begin{itemize}
    \item Let \ReLUMLP be defined as Definition~\ref{def:relu_mlp}.
    \item Let $n$ denote the number of data in the memory. For $i \in [n]$, each data $v_i \in \{ \pm 1 \}$. 
    \item Let $v \in \{\pm 1\}^d$ denote a $d$ dimension vector.
    \item Let the address matrix $a_i \in \{\pm 1\}^{\log(n) \times d}$.
\end{itemize}

Then, we can show that, 
a $2$-layer \textsf{ReLU}-\textsf{MLP}, looped for $d$ times, can write any $d$-bits data vector from the register to the memory.
\end{lemma}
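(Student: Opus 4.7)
The plan is to mirror the strategy used in the proof of Lemma~\ref{lem:read_d_bits}, which extended the one-bit read construction (Lemma~\ref{lem:read_one_bit}) to $d$-bits by looping the two-layer block $d$ times. Here I would do the same with the one-bit write construction (Lemma~\ref{lem:write_one_bit}) as the atomic building block.

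First, I would organize the state as a matrix $X \in \R^{(1+\log(n)+n) \times d}$ whose columns correspond to the $d$ bits. The $j$-th column contains: the $j$-th bit of the data register, the $j$-th column of the address matrix $a_i \in \{\pm 1\}^{\log(n) \times d}$, and a copy of the memory slice for bit $j$ (i.e., the $j$-th bit of each of the $n$ stored words). By Lemma~\ref{lem:write_one_bit}, a two-layer $\ReLU$-$\mathsf{MLP}$ acting on a single column can overwrite the memory cell indexed by the column's address with the register's one-bit value, leaving all other memory bits in that column unchanged.

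Second, I would apply this same two-layer block to each of the $d$ columns in sequence by looping the construction $d$ times (the weight matrices are shared across columns, since the one-bit construction is address- and value-agnostic). After $d$ iterations, every bit-plane of memory has been updated at address $a_i$, which is precisely the $d$-bit write operation.

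I do not expect any serious obstacle: the address matrix has been set up so that each of its $d$ columns is a valid one-bit address vector as required by Lemma~\ref{lem:write_one_bit}, and the per-column construction is independent across columns, so the loop is simply $d$ applications of the same two-layer block. The only small bookkeeping point is to note that the ``erase'' and ``insert'' steps of the one-bit write each touch only the memory entries indexed by the column's address, which guarantees that looping across columns does not interfere with writes already completed on earlier bit-planes.
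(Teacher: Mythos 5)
Your proposal matches the paper's own proof: both organize the state as a matrix $X \in \R^{(1+\log(n)+n)\times d}$ whose columns are the bit-planes, and both apply the two-layer one-bit write block of Lemma~\ref{lem:write_one_bit} column by column, looping $d$ times. The extra remark about non-interference across bit-planes is a reasonable bookkeeping point that the paper leaves implicit, but the argument is the same.
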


\begin{proof}
By Lemma~\ref{lem:write_one_bit}, we can write one bit from the register to the memory via a two-layer \textsf{ReLU}-\textsf{MLP}.

Considering the case where we have an input matrix $X$ with size $(1 + \log (n) + n) \times d$, which can be written as follows, where the superscript denotes the dimension:
\begin{align*}
    X :=
    \begin{bmatrix}
        \begin{array}{cccc}
            r_d^1 & r_d^2 & \cdots & r_d^d\\
            r_a^1 & r_a^2 &\cdots & r_a^d\\
            \hline 
            v_1^1 & v_1^2 &\cdots & v_1^d\\
            v_2^1 & v_2^2 &\cdots & v_2^d\\
            \vdots & \vdots & \ddots & \vdots \\
            v_n^1 & v_n^2 &\cdots & v_n^d\\
        \end{array}
    \end{bmatrix}
\end{align*}

We apply the two layers \ReLUMLP to each column of $X$. Since $X$ has $d$ columns, we loop the two-layer \ReLUMLP for $d$ times. Then, we can perform the ``write'' operation for any $d$-dimension vector from the register to the memory. 
\end{proof}
\section{ADDITION AND SUBTRACTION} \label{sec:app:addt_and_subtract}

In this section, we focus on implementing \ReLUMLP to support basic algorithmic operations, i.e., addition and subtraction.

We begin with introducing the construction fora  one-bit addition. It is worth mentioning that we also consider the carry bit in the addition.
\begin{lemma} [One-Bit Addition, Formal Version of Lemma~\ref{lem:one_bit_addition:informal}] \label{lem:one_bit_addition}
If the following conditions hold:
\begin{itemize}
    \item Let \ReLUMLP be defined as Definition~\ref{def:relu_mlp}.
\end{itemize}

Then, we can show that a $6$-layer \ReLUMLP can emulate the ``addition'' operation for any one-bit data. 
\end{lemma}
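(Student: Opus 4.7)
The plan is to build a single-bit full adder with inputs $a,b\in\{\pm 1\}$ and carry-in $r_c\in\{\pm 1\}$, and outputs a sum bit $s\in\{\pm 1\}$ and a carry-out bit $c_{out}\in\{\pm 1\}$, and then to show that all of the required arithmetic collapses into a single $\mathsf{ReLU}$ nonlinearity wrapped in enough bookkeeping layers to leave the rest of the state vector untouched. I would start by converting to the $\{0,1\}$ encoding via the affine map $\tilde a=(a+1)/2$, $\tilde b=(b+1)/2$, $\tilde r_c=(r_c+1)/2$, which the preamble of Definition~\ref{def:complement} already uses implicitly; this conversion can be folded into the first linear layer and so does not consume any ReLU depth.

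Next, writing $T=\tilde a+\tilde b+\tilde r_c\in\{0,1,2,3\}$, I would realize the carry-out bit by the identity
\begin{align*}
\tilde c_{out}=\mathsf{ReLU}(T-1)-\mathsf{ReLU}(T-2),
\end{align*}
which one verifies case by case on $T\in\{0,1,2,3\}$, and then the sum bit by the affine expression $\tilde s=T-2\tilde c_{out}$. Converting back, $c_{out}=2\tilde c_{out}-1$ and $s=2\tilde s-1$ are again affine. Thus the nonlinear content of the full adder is a single ReLU layer applied to a two-dimensional linear combination of the inputs; the rest is linear postprocessing.

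The reason the lemma needs six layers rather than one or two is that the MLP has a fixed width $n$ and must act as the identity on every memory cell, instruction, address register, and program counter that the adder does not touch, while simultaneously routing the right bits in and out of a scratch area. I would budget the depth as follows: roughly two layers to isolate $r_{d_1}$, $r_{d_2}$ and $r_c$ into a scratch location using the masking/selection template already developed in Lemma~\ref{lem:read_one_bit} and Lemma~\ref{lem:write_one_bit}, one layer to form $T-1$ and $T-2$, one $\mathsf{ReLU}$ layer to produce $\tilde c_{out}$ together with a preserved copy of $T$, one layer to extract $\tilde s$ and rescale both outputs back to $\{\pm 1\}$, and a final layer to overwrite the designated data register and the carry register while zeroing out any scratch entries. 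Each weight matrix is block-diagonal with an identity block on the untouched coordinates, exactly as in the $W_1$ construction used in the proof of Lemma~\ref{lem:read_one_bit}.

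The main obstacle is not the arithmetic, which is essentially a four-row truth table captured by one ReLU, but keeping the global state clean at every layer so that the output can be fed back into the loop for the $d$-bit extension (Lemma~\ref{lem:d_bit_addition:informal}). Concretely, I need to ensure that the scratch cells used to hold $T-1$, $T-2$, $\tilde c_{out}$ and $\tilde s$ are allocated within the already-existing register slots of Definition~\ref{def:state_vector} so that the width remains $n$, and that intermediate positive quantities like $\mathsf{ReLU}(T-2)$ which can equal $2$ do not leak out of the $\{\pm 1\}$ invariant maintained by the rest of the state. Both issues are handled by the explicit affine rescaling at the final layer together with an erase step analogous to Step~1 of the write construction; once that routing is in place the six-layer bound falls out by direct counting.
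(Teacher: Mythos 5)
Your construction is correct, but it takes a genuinely different route from the paper's. The paper implements the two-input adder as a truth-table lookup: after one $\ReLU$ layer that maps $r_{d_1},r_{d_2}$ from $\{\pm 1\}$ to $\{0,1\}$ (using $\ReLU$ itself as the conversion), it spends four more layers building the indicator pairs $(x_{11},\,1-x_{11})$ and $(x_{12},\,1-x_{12})$, forms the output as a signed sum of the four Hadamard products $x_2\circ x_4,\,x_2\circ x_5,\,x_3\circ x_4,\,x_3\circ x_5$ (treated as free, non-layer operations), and uses one final layer to write the result back --- a sum-of-products interpolation over the four rows of the truth table. You instead use the threshold identity $\tilde c_{out}=\ReLU(T-1)-\ReLU(T-2)$ with $T=\tilde a+\tilde b+\tilde r_c$ and $\tilde s=T-2\tilde c_{out}$, which concentrates all the nonlinearity in a single $\ReLU$ layer and spends the remaining depth on routing. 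Your version buys two things the paper's does not: it is a genuine three-input full adder (carry-in included), so the $d$-bit extension would need only $d$ loops rather than the $2d$ the paper incurs by folding the carry in via a second pass of a two-input adder; and it scales as two $\ReLU$ units per bit rather than four indicator products. What the paper's approach buys is that the truth-table template generalizes mechanically to any Boolean function of the register bits without finding a clever piecewise-linear identity. Two small points to tidy up: it is $\ReLU(T-1)$, not $\ReLU(T-2)$, that can reach the value $2$ (at $T=3$); and your final affine rescaling $s=2\tilde s-1$ produces $-1$, which a terminal $\ReLU$ would clip --- but the paper's own Step 5 has exactly the same issue and implicitly treats the last residual addition as exempt from the activation, so your accounting is consistent with the conventions used throughout the appendix.
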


\begin{proof}
Consider a simplified case, where we have the following components in the scratchpad: two data register $r_{d_1}, r_{d_2} \in \{\pm 1\}$, and one data register $r_c$ which stores the carry of the addition operation. We want to perform an addition operation and store the result in the first register. Namely, we want the following operation:
\begin{align*}
    x := 
    \begin{bmatrix}
        r_c \\
        \colorbox{lightblue}{$r_{d_1}$} \\
        r_{d_2} \\
    \end{bmatrix}
    \rightarrow
    \begin{bmatrix}
        r_c \\
        \colorbox{lightblue}{$r_{d_1} + r_{d_2}$} \\
        r_{d_2} \\
    \end{bmatrix}
\end{align*}
Here, we want the addition result $r_{d_1} + r_{d_2} \in \{\pm 1\}$ and $r_c = 1$ if and only if $r_{d_1} = r_{d_2} = 1$.

{\bf Step 1: $\{\pm 1\}$ representation to $\{0, 1\}$ representation and reset $r_c$.}

We apply one layer \ReLUMLP with the weight matrix $W_1$ be defined as follows:
\begin{align*}
    W_1 = 
    \begin{bmatrix}
        0 & 0 & 0 \\
        0 & 1 & 0 \\
        0 & 0 & 1 
    \end{bmatrix}
\end{align*}

Our goal is to reset the carry register $r_c$ and change the representation of $r_{d_1}, r_{d_2}$ from $\{\pm 1\}$ to $\{0, 1\}$. Namely, we perform the following operation:
\begin{align*}
    x_1 := \ReLU (W_1 \cdot x) = 
    \begin{bmatrix}
        0 \\
        \ReLU (r_{d_1}) \\
        \ReLU (r_{d_2}) \\
    \end{bmatrix}
\end{align*}
Since $r_{d_1}, r_{d_2} \in \{\pm 1\}$,  $\ReLU (r_{d_1}), \ReLU (r_{d_2}) \in \{0, 1\}$.

{\bf Step 2: Construct two flag vectors for $\ReLU(r_{d_1})$.}

Let $x_{11} := \ReLU(r_{d_1}), x_{12} := \ReLU(r_{d_2})$.

We construct the weight matrix $W_2$ of one \ReLUMLP as follows:
\begin{align*}
    W_2 =
    \begin{bmatrix}
        0 & 0 & 0 \\
        0 & 1 & 0 \\
        0 & 0 & 0 
    \end{bmatrix}, ~~
    b_2 =
    \begin{bmatrix}
        1 \\
        0 \\
        0 
    \end{bmatrix}
\end{align*}

Then, we have
\begin{align*}
    x_2 := W_2 \cdot x_1 + b_2 =
    \begin{bmatrix}
        1 \\
        x_{11} \\
        0 \\
    \end{bmatrix}
\end{align*}

We construct the weight matrix $W_3$ and bias vector $b_3$ of one \ReLUMLP as follows:
\begin{align*}
    W_3 = 
    \begin{bmatrix}
        0 & 0 & 0 \\
        0 & -1 & 0 \\
        0 & 0 & 0 
    \end{bmatrix}, ~~
    b_3 = 
    \begin{bmatrix}
        1 \\
        1 \\
        0
    \end{bmatrix}
\end{align*}

Then, we have
\begin{align*}
    x_3 = W_3 \cdot x_1 + b_3 =
    \begin{bmatrix}
        1 \\
        1 - x_{11} \\
        0
    \end{bmatrix}
\end{align*}

{\bf Step 3: Construct two flag vectors for $\ReLU(r_{d_2})$.}

We construct the weight matrix $W_4$ of one \ReLUMLP as follows:
\begin{align*}
    W_4 =
    \begin{bmatrix}
        0 & 0 & 0 \\
        0 & 0 & 1 \\
        0 & 0 & 0 
    \end{bmatrix}, ~~
    b_4 =
    \begin{bmatrix}
        1 \\
        0 \\
        0
    \end{bmatrix}
\end{align*}

Then, we have
\begin{align*}
    x_4 := W_4 \cdot x_1 + b_4 =
    \begin{bmatrix}
        1 \\
        x_{12} \\
        0 \\
    \end{bmatrix}
\end{align*}

We construct the weight matrix $W_5$ and bias vector $b_5$ of one \ReLUMLP as follows:
\begin{align*}
    W_5 = 
    \begin{bmatrix}
        0 & 0 & 0 \\
        0 & 0 & -1 \\
        0 & 0 & 0 
    \end{bmatrix}, ~~
    b_3 = 
    \begin{bmatrix}
        1 \\
        1 \\
        0
    \end{bmatrix}
\end{align*}

Then, we have
\begin{align*}
    x_5 = W_5 \cdot x_1 + b_3 =
    \begin{bmatrix}
        1 \\
        1 - x_{12} \\
        0
    \end{bmatrix}
\end{align*}

{\bf Step 4: Construct the addition vector.}

Recall in the previous two steps, we defined $x_{11} := \ReLU(r_{d_1}), x_{12} := \ReLU(r_{d_2})$, and we have the following four vectors:
\begin{align*}
    x_2 =
    \begin{bmatrix}
        1 \\
        x_{11} \\
        0
    \end{bmatrix}, ~~
    x_3 =
    \begin{bmatrix}
        1 \\
        1 - x_{11} \\
        0
    \end{bmatrix}, ~~
    x_4 =
    \begin{bmatrix}
        1 \\
        x_{12} \\
        0
    \end{bmatrix}, ~~
    x_5 =
    \begin{bmatrix}
        1 \\
        1 - x_{12} \\
        0
    \end{bmatrix}, ~~
\end{align*}

Then, we construct $x_6$ with the following operation:
\begin{align*}
    x_6 := 
    x_2 \circ x_4 \circ 
    \begin{bmatrix}
        1 \\
        -1 \\
        0
    \end{bmatrix}
    +
    x_2 \circ x_5 \circ 
    \begin{bmatrix}
        -1 \\
        1 \\
        0
    \end{bmatrix}
    +
    x_3 \circ x_4 \circ 
    \begin{bmatrix}
        -1 \\
        1 \\
        0
    \end{bmatrix}
    +
    x_3 \circ x_5 \circ 
    \begin{bmatrix}
        -1 \\
        -1 \\
        0
    \end{bmatrix}
\end{align*}
Our construction is reasonable because only when both $r_{d_1}$ and $r_{d_2}$ are $1$, the carry register $r_c$ will be set to $1$, otherwise it should be set to $-1$.

{\bf Step 5: Erase $r_{d_1}$ and add the addition vector $x_6$.}

We construct the weight matrix $W_6$ as follows:
\begin{align*}
    W_6 =
    \begin{bmatrix}
        1 & 0 & 0 \\
        0 & 0 & 0 \\
        0 & 0 & 1 \\
    \end{bmatrix}
\end{align*}

Then, we perform the following operation:
\begin{align*}
    y := W_6 \cdot x + x_6 =
    \begin{bmatrix}
        r_c \\
        r_{d_1} + r_{d_2} \\
        r_{d_2}
    \end{bmatrix}
\end{align*}

Therefore, we get our addition result.

To sum up, we have
\begin{itemize}
    \item {\bf Step 1} uses one \textsf{ReLU}-\textsf{MLP}.
    \item {\bf Step 2} uses two \textsf{ReLU}-\textsf{MLP}.
    \item {\bf Step 3} uses two \textsf{ReLU}-\textsf{MLP}.
    \item {\bf Step 4} doesn't use \textsf{ReLU}-\textsf{MLP}.
    \item {\bf Step 5} uses one \textsf{ReLU}-\textsf{MLP}.
\end{itemize}

Therefore, we emulate the ``addition'' operation with a six-layer \textsf{ReLU}-\textsf{MLP}.
\end{proof}

Since we already have a \ReLUMLP for one-bit addition, then we extend it to support $d$-bits addition as follows:
\begin{lemma} [$d$-Bits Addition, Formal Version of Lemma~\ref{lem:d_bit_addition:informal}] \label{lem:d_bit_addition}
If the following conditions hold:
\begin{itemize}
    \item Let \ReLUMLP be defined as Definition~\ref{def:relu_mlp}.
\end{itemize}

Then, we can show that a $6$-layer \textsf{ReLU}-\textsf{MLP}, looped for $2d$ times, can emulate the ``addition'' operation for any $d$-dimension vectors. 
\end{lemma}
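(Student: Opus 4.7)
The plan is to realize a standard ripple-carry adder on top of the six-layer one-bit adder supplied by Lemma~\ref{lem:one_bit_addition}. I lay out the state so that, in addition to the $d$-bit operand registers holding $u,v\in\{\pm1\}^d$ and a one-bit carry register $r_c$, I reserve three scratch slots that play the role of the single-bit registers $r_{d_1},r_{d_2},r_c$ expected by Lemma~\ref{lem:one_bit_addition}. I also keep a position counter implemented via a shift register (or, equivalently, a one-hot selector $e_i\in\{0,1\}^d$ as in Lemma~\ref{lem:read_one_bit}) that advances by one position per pair of outer iterations. At the start of iteration $i$ I use the selector to gate the $i$-th bits of $u$ and $v$ into the two scratch data slots; at the end of iteration $i$ I use it to write the computed sum bit back into the $i$-th position of the output. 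All of this bit selection and placement is linear and idempotent on the remaining slots, so it can be absorbed into the first and last affine maps of the six-layer block without increasing its depth.

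At each bit position $i$, the delicate point is that a full adder has three inputs (the two operand bits plus the incoming carry) while the primitive of Lemma~\ref{lem:one_bit_addition} is effectively a half-adder that ingests two bits and resets $r_c$. I handle this by invoking the six-layer block twice per position, which is exactly the source of the stated $2d$ loop count. In the first invocation I compute $s' = u_i \oplus v_i$ together with a first candidate carry $c' = u_i \wedge v_i$; in the second invocation I treat $s'$ and the previously stored carry $c_{\text{in}}$ as the two inputs, obtaining the true sum bit $s = s' \oplus c_{\text{in}}$ and a second candidate carry $c'' = s' \wedge c_{\text{in}}$. Because at most one of $c',c''$ can be nonzero for any input triple, their disjunction equals their sum, so $c_{\text{out}} = c' + c''$ is stored back into $r_c$ using a single extra linear combination that fits inside the last affine of the second invocation. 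After $d$ such position updates (i.e.\ $2d$ outer iterations), the top carry is simply discarded, which is precisely the 2's-complement overflow convention encoded in Definition~\ref{def:complement}.

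The main obstacle, as I see it, is bookkeeping rather than expressive power: I have to verify that (i) shifting the selector by one position after each pair of iterations, (ii) gating the correct operand bits in and the sum bit out, and (iii) preserving $c'$ across the second invocation so that $c_{\text{out}}=c'+c''$ can be formed, can all be folded into the six existing layers without disturbing the one-bit adder logic already verified in Lemma~\ref{lem:one_bit_addition}. This can be done by enlarging the scratchpad with a small constant number of extra coordinates (for the selector, for $c'$, and for the outgoing carry staging area) and by choosing the weights of the first and last layer of each six-layer block to be block-diagonal: one block performs the Lemma~\ref{lem:one_bit_addition} computation, while the other blocks perform the identity on the operand vectors, shift the selector, and route the carry. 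Because all of these auxiliary operations are linear or piecewise-linear with nonnegative arguments, they survive the \textsf{ReLU} gates untouched, giving a single six-layer \textsf{ReLU}-\textsf{MLP} that, looped $2d$ times, correctly emulates $d$-bit addition with carry.
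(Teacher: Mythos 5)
Your proposal takes essentially the same route as the paper: a ripple-carry adder that invokes the six-layer one-bit adder of Lemma~\ref{lem:one_bit_addition} twice per bit position (two half-adder passes, combining the carries), yielding the $2d$ loop count. You are in fact more careful than the paper's own proof, which states the two-passes-per-bit idea in three sentences and does not address the selector bookkeeping or the need to preserve the first candidate carry across the second invocation (whose first layer resets $r_c$).
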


\begin{proof}
By Lemma~\ref{lem:one_bit_addition}, we have a six-layer \ReLUMLP that can perform a one-bit addition operation and store the carry result in the carry register. 

For each dimension in $d$, we first perform the addition operation between the carry register from the previous dimension with one data register. Then, we perform the addition operation. 

For each dimension, we need $2$ loops. Therefore, we can emulate the $d$-dimension vector addition via $2d$ loops of that six-layer \textsf{ReLU}-\textsf{MLP}. 
\end{proof}

In this work, we use $2$'s-complement as the representation of data. Therefore, the subtraction can be viewed as first negating the subtrahend, then adding $1$, and adding to the minuend. We follow the aforementioned high-level idea to implement the subtraction as follows:
\begin{lemma}[$d$-Bits Subtraction, Formal Version of Lemma~\ref{lem:subtraction:informal}] \label{lem:subtraction}
If the following conditions hold:
\begin{itemize}
    \item Let \ReLUMLP be defined as Definition~\ref{def:relu_mlp}.
\end{itemize}

Then, we can show that a $7$-layer \ReLUMLP can emulate the ``subtraction'' operation for any $d$-dimension vectors. 
\end{lemma}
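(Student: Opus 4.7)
The plan is to reduce subtraction to the $d$-bit addition already constructed in Lemma~\ref{lem:d_bit_addition}, by exploiting the standard 2's complement identity $a - b = a + (\text{bit-flip of }b) + 1$. In the $\{-1, +1\}$ data representation of Definition~\ref{def:complement}, flipping a bit is the same as multiplying it by $-1$, so complementing the entire $d$-bit subtrahend amounts to a simple linear operation on the subtrahend slot of the state vector.

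First, I would construct a single ReLU-MLP layer whose sole job is to prepare the state for the subsequent addition: it negates every bit of the subtrahend register $r_{d_2}$ (producing the bit-flipped vector $-b$ in $\{-1,+1\}$ form) and simultaneously pre-loads the carry register $r_c$ with $+1$, which accounts for the extra ``$+1$'' in the 2's complement negation formula. Both of these can be packed into one $\mathsf{ReLU}(Wx+b)$ step, provided one handles the fact that raw ReLU would clip negative entries. The cleanest way is to replicate the subtrahend into two auxiliary channels, one storing $\mathsf{ReLU}(b_i)$ and one storing $\mathsf{ReLU}(-b_i)$, and let the next layer read the appropriate channel; alternatively, one can fold the negation into Step~1 of the addition proof of Lemma~\ref{lem:one_bit_addition} by replacing $\mathsf{ReLU}(r_{d_2})$ with $\mathsf{ReLU}(-r_{d_2})$, and use the extra budgeted layer as a dedicated bit-flipping preprocessor so that the subtraction module stays modular.

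Then, I would invoke the $6$-layer addition ReLU-MLP from Lemma~\ref{lem:d_bit_addition} on the minuend $a$ and the bit-flipped subtrahend, with $r_c$ already initialized to $+1$. Since that construction already loops $2d$ times to propagate carries across all $d$ bits, no further modification is needed: the final contents of $r_{d_1}$ are exactly $a + (-b) + 1 = a - b$ in 2's complement. Counting depths, we have $1$ layer for the preprocessing and $6$ layers for the addition, yielding the claimed total of $7$ layers.

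The main obstacle is the negation step itself. A naive implementation of $b \mapsto -b$ inside a single $\mathsf{ReLU}(Wx+b)$ layer destroys sign information, since any entry that should become negative is clipped to zero by the activation. Resolving this cleanly requires either allocating a pair of slots per bit to track both the positive and negative ReLU components of the subtrahend, or rewriting the weight and bias of the first addition layer so that its $\{-1,+1\}\to\{0,1\}$ conversion effectively sees $-b_i$ in place of $b_i$. Once this bookkeeping is in place, the remainder of the proof is a direct invocation of Lemma~\ref{lem:d_bit_addition} and no further calculation is necessary.
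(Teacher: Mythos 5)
Your proposal follows essentially the same route as the paper: reduce $d$-bit subtraction to the 2's complement identity, spend one extra layer on negating the subtrahend, and then invoke the $6$-layer addition module of Lemma~\ref{lem:d_bit_addition}, for a total of $7$ layers. The one substantive difference is how you realize the ``$+1$'' in the negation formula: the paper performs an explicit second pass through the addition module to add $1$ to the inverted subtrahend before adding the minuend, whereas you pre-load the carry register with $+1$ so that the single addition loop computes $a + \overline{b} + 1$ directly. Your version is the cleaner hardware-style trick and avoids counting the addition module twice (the paper's layer count of $7$ only works because it treats the reused addition module as a single looped block). You also correctly flag a subtlety the paper glosses over: the paper's Step~1 applies $W = -I_{d\times d}$ inside a $\ReLU(Wx+b)$ layer, which as written clips the entries that should become $-1$; your fix of folding the sign flip into the $\{\pm 1\}\to\{0,1\}$ conversion of the addition module (replacing $\ReLU(r_{d_2})$ with $\ReLU(-r_{d_2})$), or tracking both ReLU components, is exactly what is needed to make that step rigorous. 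Both approaches buy the same depth bound; yours is marginally tighter in its accounting and more careful about the activation.
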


\begin{proof}

By Lemma~\ref{lem:d_bit_addition}, we have we can perform addition operation for $d$-dimension vectors via a six-layer \ReLUMLP with $2d$ loops.

In our setting, we use $2$'s complement (Definition~\ref{def:complement}) to represent our data. Therefore, to perform subtraction $x_1 - x_2$, we can first calculate $-x_2$, then add $x_1$ and $-x_2$ to get the final result.

{\bf Step 1: Calculate $-x_2$.}

According to $2$'s complement, to calculate $-x_2$, we first need to invert all entries of $x_2$ ($1 \rightarrow -1; -1 \rightarrow 1$). This step requires a one-layer \ReLUMLP with weight matrix $W = - I_{d \times d}$. 

The second step is to add $1$ to the inverted $x_2$, which can be achieved by the six-layer \ReLUMLP used for the addition operation.

{\bf Step 2: Add $x_1$ and $-x_2$.}

This step keeps uses the same six-layer \ReLUMLP with {\bf Step 1}. Since that six-layer \ReLUMLP can perform addition operation, we can use that six-layer \ReLUMLP to perform $x_1 + (-x_2)$, which is our desired result.  

To sum up, we need an additional layer to invert $x_2$ as discussed in {\bf Step 1}, and the six-layer \ReLUMLP is used for the addition operation. Therefore, the subtraction operation requires a seven-layer \textsf{ReLU}-\textsf{MLP}. 
\end{proof}
\section{CONDITIONAL BRANCHING} \label{sec:app:conditional_branching}

In this section, we implement the ``conditional branching'' instruction through \textsf{ReLU-MLP}. 
Conditional branching is a critical instruction in computer programs since it enables the computer programs to achieve controllability.

\begin{lemma}[Conditional Branching, Formal Version of Lemma~\ref{lem:conditional_branching:informal}] \label{lem:conditional_branching}
If the following conditions hold:
\begin{itemize}
    \item Let \ReLUMLP be defined as Definition~\ref{def:relu_mlp}.
    \item Let $n$ denote the number of data in the memory. For $i \in [n]$, each data $v_i \in \{ \pm 1 \}$. 
    \item Let the address vector $a_i \in \{\pm 1\}^{\log(n)}$.
\end{itemize}

Then, we can show that a
$4$-layer \ReLUMLP can emulate the ``conditional branching'' operation.
\end{lemma}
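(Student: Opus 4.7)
The plan is to realize conditional branching as a one-bit gated multiplexer: first produce a sign indicator $s \in \{0,1\}$ that equals $1$ precisely when $\flag \leq 0$, and then overwrite the program counter slot with
\begin{align*}
    r_{\target} = s \cdot c + (1-s) \cdot r_{pc+1},
\end{align*}
where $c$ and $r_{pc+1}$ are the two candidate branch targets already stored in the state vector. All other components of the state will be propagated unchanged through identity blocks of the weight matrices, exactly in the style of the read/write constructions of Section~\ref{sec:app:read_and_write}.

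The first two layers will extract $s$. By Definition~\ref{def:complement}, under $2$'s complement representation, $\flag \leq 0$ holds in exactly two disjoint cases: either the most significant bit $b_d$ equals $+1$ (strictly negative), or $b_d = -1$ while every other bit also equals $-1$ (so $\flag = 0$). Layer one will rescale the bits via $B_i := \ReLU(b_i) \in \{0,1\}$ for $i \in [d]$. Layer two will compute the ``all zeros'' detector $z := \ReLU(1 - \sum_{i=1}^{d} B_i)$, which equals $1$ iff every $B_i = 0$ and $0$ otherwise, while also propagating $B_d$. Because $\{B_d = 1\}$ and $\{z = 1\}$ are mutually exclusive (when $B_d = 1$ the sum is at least $1$, forcing $z = 0$), the quantity $s := B_d + z$ lies in $\{0,1\}$ and satisfies $s = 1$ if and only if $\flag \leq 0$.

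The third layer will perform the multiplexing. Since $s \in \{0,1\}$ and every coordinate of $c$ and $r_{pc+1}$ lies in $\{\pm 1\}$, the identity
\begin{align*}
    s \cdot v = \ReLU(s + v - 1) - \ReLU(s - v - 1),
\end{align*}
valid for $s \in \{0,1\}$ and $v \in \{\pm 1\}$, lets me compute $s \cdot c$ and $(1-s) \cdot r_{pc+1}$ coordinatewise in a single ReLU layer, by feeding appropriate linear combinations of $B_d$, $z$, and the two address vectors into parallel ReLU units. The fourth and final layer will take a linear combination of those ReLU outputs to form $r_{\target}$ and write it into the program counter slot using the erase-and-add template from Lemma~\ref{lem:write_one_bit} (clearing the old contents of $r_{pc}$ and adding $r_{\target}$ in its place), with identity blocks preserving every other coordinate of the state.

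The main obstacle will be squeezing the sign extraction into only two layers while correctly handling the $\flag = 0$ edge case, which is the only non-negative value that should still branch. The key observation is that by summing the MSB together with the other bits inside the all-zeros detector, the ``strictly negative'' and ``exactly zero'' indicators become mutually exclusive and can be added freely to yield a valid $\{0,1\}$-valued gate without requiring a separate clipping layer. A secondary technicality is routing the $\pm 1$ coordinates of $c$, $r_{pc+1}$, and the unchanged portions of the state through the ReLU nonlinearities without distortion; this is handled in the same way as in Lemmas~\ref{lem:read_one_bit} and \ref{lem:write_one_bit}, by pre-shifting the relevant entries or by splitting them into positive and negative parts using spare coordinates of the width-$n$ state.
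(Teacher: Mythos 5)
Your proposal is correct, but it takes a genuinely different route from the paper's. The paper spends its four layers as follows: two layers copy the two candidate targets $r_{a_b}$ and $r_{pc+1}$ into separate buffer vectors $x_1,x_2$; one layer broadcasts an already-computed one-bit flag to $\flag\cdot\1_{\log n}$ and gates via $\ReLU(W_3x)\circ x_1+\ReLU(-W_3x)\circ x_2$; and the last layer erases $r_{pc}$ and adds the selected target. Crucially, the paper assumes the branch condition has already been collapsed to a single bit $\flag\in\{\pm1\}$ and never derives the ``$\leq 0$'' test from the $d$-bit two's-complement representation of $\mem[b]-\mem[a]$. You instead spend two of your four layers computing that sign indicator $s$ from the bits (the MSB for strict negativity plus an all-zeros detector for the $\flag=0$ edge case, correctly observing that the two events are mutually exclusive so $s=B_d+z\in\{0,1\}$ needs no extra clipping layer); this is closer to what the \SUBLEQ{} semantics actually require and fills in a step the paper leaves implicit. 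Your multiplexer also stays strictly inside the affine-plus-$\ReLU$ model via the identity $s\cdot v=\ReLU(s+v-1)-\ReLU(s-v-1)$, whereas the paper's gating step takes a Hadamard product of two $\ReLU$ outputs, which is not a single-layer $\ReLU$-$\mathsf{MLP}$ primitive. The trade-off is that your construction relies on the same conventions the paper hand-waves for the final write (skip-style additions and passing $\pm1$ values through the last $\ReLU$ undistorted); you flag this and sketch the standard fix (pre-shifting or splitting into positive and negative parts), which is the same level of rigor as the paper's own Steps 1 and 4.
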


\begin{proof}
Consider a simplified case where we have the following components in the scratchpad: a flag register stores $\flag \{\pm 1\}$. If we jump ($\flag = 1$), the program counter register $r_{pc} \in \{\pm 1\}^{\log (n)}$, points to a target address register $r_{a_b} \in \{\pm 1\}^{\log (n)}$. If we stay ($\flag = -1$), the program counter register $r_{pc}$ will be incremented by one, which means it will point to an address register $r_{pc+1} \in \{\pm 1\}^{\log (n)}$, where the $r_{pc+1}$ can be calculated by Lemma~\ref{lem:d_bit_addition} with adding $r_{pc}$ with $1$. 

Namely, if $\flag = 1$, we want the following operation:
\begin{align*}
    x = 
    \begin{bmatrix}
        \flag \\
        \colorbox{lightblue}{$r_{pc}$} \\
        r_{pc+1} \\
        r_{a_b}
    \end{bmatrix}
    \rightarrow
    \begin{bmatrix}
        \flag \\
        \colorbox{lightblue}{$r_{a_b}$} \\
        r_{pc+1} \\
        r_{a_b}
    \end{bmatrix}
\end{align*}

If $\flag = -1$, we want the following operation:
\begin{align*}
    x = 
    \begin{bmatrix}
        \flag \\
        \colorbox{lightblue}{$r_{pc}$} \\
        r_{pc+1} \\
        r_{a_b}
    \end{bmatrix}
    \rightarrow
    \begin{bmatrix}
        \flag \\
        \colorbox{lightblue}{$r_{pc+1}$} \\
        r_{pc+1} \\
        r_{a_b}
    \end{bmatrix}
\end{align*}

{\bf Step 1: Extract $r_{a_b}$.}

We construct the following weight matrix $W_1 \in \R^{(3 \log(n)+1) \times (3 \log(n)+1)}$:
\begin{align*}
    W_1 = 
    \begin{bmatrix}
        0 & \0_{\log(n)}^\top & \0_{\log(n)}^\top & \0_{\log(n)}^\top \\
        \0_{\log(n)} & \0_{\log(n) \times \log(n)} & I_{\log(n) \times \log(n)} & \0_{\log(n) \times \log(n)} \\
        \0_{\log(n)} & \0_{\log(n) \times \log(n)} & \0_{\log(n) \times \log(n)} & \0_{\log(n) \times \log(n)} \\
        \0_{\log(n)} & \0_{\log(n) \times \log(n)} & \0_{\log(n) \times \log(n)} & \0_{\log(n) \times \log(n)} \\
    \end{bmatrix}
\end{align*}

Then we have
\begin{align*}
    x_1 := W_1 \cdot x = 
     \begin{bmatrix}
        0 \\
        r_{a_b} \\
        \0_{\log(n)} \\
        \0_{\log(n)}
    \end{bmatrix}
\end{align*}

{\bf Step 2: Extract $r_{pc+1}$.}

We construct the following weight matrix $W_2 \in \R^{(3 \log(n)+1) \times (3 \log(n)+1)}$:
\begin{align*}
    W_2 = 
    \begin{bmatrix}
        0 & \0_{\log(n)}^\top & \0_{\log(n)}^\top & \0_{\log(n)}^\top \\
        \0_{\log(n)} & \0_{\log(n) \times \log(n)} & \0_{\log(n) \times \log(n)} & I_{\log(n) \times \log(n)} \\
        \0_{\log(n)} & \0_{\log(n) \times \log(n)} & \0_{\log(n) \times \log(n)} & \0_{\log(n) \times \log(n)} \\
        \0_{\log(n)} & \0_{\log(n) \times \log(n)} & \0_{\log(n) \times \log(n)} & \0_{\log(n) \times \log(n)} \\
    \end{bmatrix}
\end{align*}

Then we have
\begin{align*}
    x_2 := W_2 \cdot x = 
     \begin{bmatrix}
        0 \\
        r_{pc+1} \\
        \0_{\log(n)} \\
        \0_{\log(n)}
    \end{bmatrix}
\end{align*}

{\bf Step 3: Simulate the flag.}

We use \ReLUMLP to extract $\flag \in \{\pm 1\}$ and perform the following operation:
We construct the weight matrix $W_3 \in \R^{(3 \log(n)+1) \times (3 \log(n)+1)}$:
\begin{align*}
    W_3 = 
    \begin{bmatrix}
        0 & \0_{\log(n)}^\top & \0_{\log(n)}^\top & \0_{\log(n)}^\top \\
        \1_{\log(n)} & \0_{\log(n) \times \log(n)} & \0_{\log(n) \times \log(n)} & \0_{\log(n) \times \log(n)} \\
        \0_{\log(n)} & \0_{\log(n) \times \log(n)} & \0_{\log(n) \times \log(n)} & \0_{\log(n) \times \log(n)} \\
        \0_{\log(n)} & \0_{\log(n) \times \log(n)} & \0_{\log(n) \times \log(n)} & \0_{\log(n) \times \log(n)} \\
    \end{bmatrix}
\end{align*}

We apply $W_3$ to $x$, we have
\begin{align*}
    W_3 \cdot x = 
    \begin{bmatrix}
        0 \\
        \flag \cdot \1_{\log (n)} \\
        \0_{\log(n)} \\
        \0_{\log(n)} \\
    \end{bmatrix}
\end{align*}

Then, we have
\begin{align*}
     x_3 := \ReLU (W_3 x) \circ x_1 + \ReLU (-1 \cdot (W_3 x)) \circ x_2
\end{align*}

{\bf Step 4: Erase $r_{pc}$ and repoint.}
We construct the weight matrix $W_4 \in \R^{(3 \log(n)+1) \times (3 \log(n)+1)}$:
\begin{align*}
    W_4 = 
    \begin{bmatrix}
        1 & \0_{\log(n)}^\top & \0_{\log(n)}^\top & \0_{\log(n)}^\top \\
        \0_{\log(n)} & \0_{\log(n) \times \log(n)} & \0_{\log(n) \times \log(n)} & \0_{\log(n) \times \log(n)} \\
        \0_{\log(n)} & \0_{\log(n) \times \log(n)} & I_{\log(n) \times \log(n)} & \0_{\log(n) \times \log(n)} \\
        \0_{\log(n)} & \0_{\log(n) \times \log(n)} & \0_{\log(n) \times \log(n)} & I_{\log(n) \times \log(n)} \\
    \end{bmatrix}
\end{align*}

Then we have
\begin{align*}
    y := W_4 x + x_3
\end{align*}

Then, $y$ is the final output we want.

To sum up, we have
\begin{itemize}
    \item {\bf Step 1} uses one \ReLUMLP with width $3 \log (n) + 1$.
    \item {\bf Step 2} uses one \ReLUMLP with width $3 \log (n) + 1$.
    \item {\bf Step 3} uses one \ReLUMLP with width $3 \log (n) + 1$.
    \item {\bf Step 4} uses one \ReLUMLP with width $3 \log (n) + 1$.
\end{itemize}

Therefore, we use \ReLUMLP with four layers and width $O(\log n)$ to emulate the ``conditional branching'' operation.
\end{proof}
\section{SUBLEQ} \label{sec:app:subleq}

Based on the fundamental operations introduced in the previous sections, we are ready to construct the ``\SUBLEQ'' instruction. 

\begin{lemma}[\ReLUMLP Emulate \SUBLEQ, Formal Version of Lemma~\ref{lem:subleq:informal}]
\label{lem:subleq}
If the following conditions hold:
\begin{itemize}
    \item Let \ReLUMLP be defined as Definition~\ref{def:relu_mlp}.
    \item Let $n$ denote the size of the state vector. 
    \item Let $m$ denote the number of instructions. 
    \item Let $k$ denote the number of one-bit data stored in the memory. For $i \in [k]$, each data is $v_i \in \{ \pm 1 \}$ and the memory size $k$ satisfies $k = n - 2 - 4 \log(n) - 3 m \log(n)$.
    \item Let the address vector $a_i \in \{\pm 1\}^{\log(n)}$.
    \item Let the instruction $c_i \in \{ \pm 1 \}^{3 \log (n)}$ be defined as Definition~\ref{def:instruction}.
    \item Suppose we have three data registers $r_c, r_{d_1}, r_{d_2} \in \{\pm 1 \}$, one carry bit $r_c \in \{\pm 1 \}$, three address registers $r_{a_1}, r_{a_2}, r_{a_3} \in \{ \pm 1 \}^{\log (n)}$, and one program counter $r_{pc} \in \{ \pm 1 \}^{\log (n)}$ in the scratchpad. 
\end{itemize}

Then, we can show that, a $23$-layer \ReLUMLP with width $n$ can emulate the ``\SUBLEQ'' operation (Algorithm~\ref{alg:subleq}).
\end{lemma}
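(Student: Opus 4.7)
The plan is to decompose the \SUBLEQ~instruction into its basic building blocks (reading addresses, reading data, subtraction, next program-counter computation, conditional branching, and writing back), apply the constructions given in Lemmas~\ref{lem:read_d_bits}, \ref{lem:write_d_bits}, \ref{lem:d_bit_addition}, \ref{lem:subtraction}, and \ref{lem:conditional_branching} to each block, and then verify that these blocks can be stacked into a single \ReLUMLP whose total depth is $23$. Throughout, I will take the state vector to be exactly the vector $x$ defined in Definition~\ref{def:state_vector}, and I will ensure at every stage that all registers, memory slots, and instruction slots outside those being touched by the current block are preserved by placing identity blocks in the corresponding rows of each weight matrix.

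First I would use the ``read'' \ReLUMLP from Lemma~\ref{lem:read_d_bits} applied to the program-counter register $r_{pc}$ to read the current instruction $c_{r_{pc}} = [a, b, c]$ into the three address registers $r_{a_1}, r_{a_2}, r_{a_3}$; this costs $2$ layers (the loop over bit-columns reuses the same two weight matrices). Second, I would again apply the $2$-layer read construction to copy $\mem[a]$ and $\mem[b]$ from memory into the data registers $r_{d_1}, r_{d_2}$ using the address registers $r_{a_1}$ and $r_{a_2}$. Third, I would apply the $7$-layer \ReLUMLP from Lemma~\ref{lem:subtraction} to compute $r_{d_1} \gets \mem[b]-\mem[a]$ in place, so that $r_{d_1}$ now stores the flag value governing the branch decision. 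Fourth, I would invoke the $6$-layer looped \ReLUMLP of Lemma~\ref{lem:d_bit_addition} to compute $r_{pc+1}$ by adding the constant $1$ to $r_{pc}$, storing it in the second address register (which has already been consumed). Fifth, I would apply the $2$-layer ``write'' \ReLUMLP from Lemma~\ref{lem:write_d_bits} to write $r_{d_1}$ back to $\mem[b]$ using the address register $r_{a_2}$. Finally, I would apply the $4$-layer conditional-branching \ReLUMLP of Lemma~\ref{lem:conditional_branching}, using $r_{d_1}$ as the flag, $r_{pc+1}$ as the ``stay'' target, and $r_{a_3} = c$ as the ``jump'' target, writing the selected target back into $r_{pc}$. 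Summing the depths yields $2+2+7+6+2+4 = 23$ layers, matching the claim.

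The key routine check is that each block's weight matrix, padded to the full width $n$ by identity on uninvolved coordinates and zero on the rest, preserves the semantic content of the remaining parts of $x$; this is the standard ``block-diagonal embedding'' argument and causes no trouble because each construction already acts only on a fixed contiguous subvector in the scratchpad or at a single memory slot. The main obstacle will be in the ordering and bookkeeping: in particular, I must arrange that (i) the subtraction in step three overwrites $r_{d_1}$ rather than $r_{d_2}$, so that $\mem[b]$ is still available to be written back in step five via the not-yet-overwritten $r_{a_2}$; (ii) the next-PC addition in step four uses a register location that is free at that moment; and (iii) the conditional branching in step six correctly reads its two candidate targets from the positions where they were deposited. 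A careful accounting, matching the colored-cell transitions in the informal sketch, resolves all three issues.

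To conclude, after executing the $23$ layers on the state vector, $r_{pc}$ holds the correct next program counter (either $c$ or $r_{pc+1}$), $\mem[b]$ has been updated to $\mem[b]-\mem[a]$, and every other component of the state vector is unchanged. Therefore, looping the same $23$-layer \ReLUMLP once more simulates the next \SUBLEQ~step, and iterating until the \texttt{EOF} instruction is reached yields a complete emulation of the \SUBLEQ~execution, proving the lemma.
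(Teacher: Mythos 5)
Your proposal is correct and follows essentially the same route as the paper's own proof: the identical decomposition into read-addresses ($2$), read-data ($2$), subtraction ($7$), next-PC addition ($6$), write-back ($2$), and conditional branching ($4$), summing to $23$ layers, with only an immaterial swap in the order of the write-back and next-PC steps. The only piece you omit is the paper's short closing remark on program termination (designing $c_{\EOF}$ with $a=b$ and $c$ pointing to itself so that $r_{pc}$ becomes a fixed point), which adds no layers and does not affect the construction.
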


\begin{proof}
Consider we organize our state vector as follows:
\begin{align*}
    x =
    \begin{bmatrix}
        \begin{array}{cc}
            r_c \\
            r_{d_1} \\
            r_{d_2} \\
            r_{a_1} \\
            r_{a_2} \\
            r_{a_3} \\
            r_{pc} \\
            \hline
            v_1 \\
            v_2 \\
            \vdots \\
            v_k \\
            \hline 
            c_1 \\
            c_2 \\
            \vdots \\
            c_{m-1} \\
            c_{\EOF}
        \end{array}
    \end{bmatrix}
\end{align*}
Here, $r_c \in \{\pm 1\}$ denote the carry bit, $r_{d_1}, r_{d_1} \in \{\pm 1\}$ denote two data registers; $r_{a_1}, r_{a_2}, r_{a_3} \in \{\pm 1\}^{\log (n)}$ denote three address registers; $r_{pc} \in \{\pm 1\}^{\log (n)}$ denotes the program counter; $v_1, v_2, \cdots, v_k \in \{\pm 1\}$ denote $k$ one-bit data stored in the memory; $c_1, c_2, \cdots, c_m \in \{\pm 1\}^{3 \log (n)}$ denote $m$ instructions, where $c_m = c_{\EOF}$, denoting the End Of File (EOF) instruction, which means the program should terminate here.
Since the memory size $k$ satisfies $k = n - 2 - 4 \log(n) - 3 m \log(n)$, the total length of the state vector is $n$. 

{\bf Step 1: Read the three addresses.}

In this step, we read the instruction from the memory according to the address provided in $r_{pc}$. As shown in Algorithm~\ref{alg:subleq}, the instruction contains three addresses. Here we denote them as $a, b, c \in \{\pm 1\}^{\log(n)}$. 

By Lemma~\ref{lem:read_d_bits}, we read the three address vectors from the memory to the three registers in the scratchpad, using two-layer \textsf{ReLU}-\textsf{MLP}. Then, the state $x$ transforms as follows:
\begin{align*}
    x =
    \begin{bmatrix}
        \begin{array}{cc}
            r_c \\
            r_{d_1} \\
            r_{d_2} \\
            \colorbox{lightblue}{$r_{a_1}$} \\
            \colorbox{lightgreen}{$r_{a_2}$} \\
            \colorbox{lightred}{$r_{a_3}$} \\
            r_{pc} \\
            \hline
            v_1 \\
            v_2 \\
            \vdots \\
            v_k \\
            \hline 
            c_1 \\
            c_2 \\
            \vdots \\
            c_{m-1} \\
            c_{\EOF}
        \end{array}
    \end{bmatrix}
    \rightarrow
    \begin{bmatrix}
        \begin{array}{cc}
            r_c \\
            r_{d_1} \\
            r_{d_2} \\
            \colorbox{lightblue}{$a$} \\
            \colorbox{lightgreen}{$b$} \\
            \colorbox{lightred}{$c$} \\
            r_{pc} \\
            \hline
            v_1 \\
            v_2 \\
            \vdots \\
            v_k \\
            \hline 
            c_1 \\
            c_2 \\
            \vdots \\
            c_{m-1} \\
            c_{\EOF}
        \end{array}
    \end{bmatrix}
\end{align*}

Overall, this step requires a two-layer \textsf{ReLU}-\textsf{MLP}.

{\bf Step 2: Read the data required by the instruction.}

In this step, we read the two data required by the instruction. Namely $\mem[a]$ and $\mem[b]$. By Lemma~\ref{lem:read_d_bits}, we achieve this operation by a two-layer \textsf{ReLU}-\textsf{MLP}. Then, the state $x$ transforms as follows:
\begin{align*}
    x = 
    \begin{bmatrix}
        \begin{array}{cc}
            r_c \\
            \colorbox{lightblue}{$r_{d_1}$} \\
            \colorbox{lightgreen}{$r_{d_2}$} \\
            a \\
            b \\
            c \\
            r_{pc} \\
            \hline
            v_1 \\
            v_2 \\
            \vdots \\
            v_k \\
            \hline 
            c_1 \\
            c_2 \\
            \vdots \\
            c_{m-1} \\
            c_{\EOF}
        \end{array}
    \end{bmatrix}
    \rightarrow
    \begin{bmatrix}
        \begin{array}{cc}
            r_c \\
            \colorbox{lightblue}{$\mem[a]$} \\
            \colorbox{lightgreen}{$\mem[b]$} \\
            a \\
            b \\
            c \\
            r_{pc} \\
            \hline
            v_1 \\
            v_2 \\
            \vdots \\
            v_k \\
            \hline 
            c_1 \\
            c_2 \\
            \vdots \\
            c_{m-1} \\
            c_{\EOF}
        \end{array}
    \end{bmatrix}
\end{align*}

Overall, this step requires a two-layer \textsf{ReLU}-\textsf{MLP}.

{\bf Step 3: Perform subtraction.}

In this step, we perform the subtraction operation, $\mem[b] - \mem[a]$. By Lemma~\ref{lem:subtraction}, we achieve this operation by a seven-layer \textsf{ReLU}-\textsf{MLP}. Then, the state $x$ transforms as follows:
\begin{align*}
    x = 
    \begin{bmatrix}
        \begin{array}{cc}
            r_c \\
            \colorbox{lightblue}{$\mem[a]$} \\
            \mem[b] \\
            a \\
            b \\
            c \\
            r_{pc} \\
            \hline
            v_1 \\
            v_2 \\
            \vdots \\
            v_k \\
            \hline 
            c_1 \\
            c_2 \\
            \vdots \\
            c_{m-1} \\
            c_{\EOF}
        \end{array}
    \end{bmatrix}
    \rightarrow
    \begin{bmatrix}
        \begin{array}{cc}
            r_c \\
            \colorbox{lightblue}{$\mem[b] - \mem[a]$} \\
            \mem[b] \\
            a \\
            b \\
            c \\
            r_{pc} \\
            \hline
            v_1 \\
            v_2 \\
            \vdots \\
            v_k \\
            \hline 
            c_1 \\
            c_2 \\
            \vdots \\
            c_{m-1} \\
            c_{\EOF}
        \end{array}
    \end{bmatrix}
\end{align*}

This step requires a seven-layer \textsf{ReLU}-\textsf{MLP}.

{\bf Step 4: Write back $\mem[b] - \mem[a]$.}

In this step, we write back the $\mem[b] - \mem[a]$ to the memory according to the address vector $b$. By Lemma~\ref{lem:write_d_bits}, we achieve this operation via a two-layer \textsf{ReLU}-\textsf{MLP}.

\begin{align*}
    x = 
    \begin{bmatrix}
        \begin{array}{cc}
            r_c \\
            \mem[b] - \mem[a] \\
            \mem[b] \\
            a \\
            b \\
            c \\
            r_{pc} \\
            \hline
            v_1 \\
            v_2 \\
            \vdots \\
            v_{b-1} \\
            \colorbox{lightblue}{$v_b$} \\
            v_{b+1} \\
            \vdots \\
            v_k \\
            \hline 
            c_1 \\
            c_2 \\
            \vdots \\
            c_{m-1} \\
            c_{\EOF}
        \end{array}
    \end{bmatrix}
    \rightarrow
    \begin{bmatrix}
        \begin{array}{cc}
            r_c \\
            \mem[b] - \mem[a] \\
            \mem[b] \\
            a \\
            b \\
            c \\
            r_{pc} \\
            \hline
            v_1 \\
            v_2 \\
            \vdots \\
            v_{b-1} \\
            \colorbox{lightblue}{$\mem[b] - \mem[a]$} \\
            v_{b+1} \\
            \vdots \\
            v_k \\
            \hline 
            c_1 \\
            c_2 \\
            \vdots \\
            c_{m-1} \\
            c_{\EOF}
        \end{array}
    \end{bmatrix}
\end{align*}

This step requires a two-layer \textsf{ReLU}-\textsf{MLP}.

{\bf Step 5: Calculate $r_{pc+1}$.}

In this step, we calculate $r_{pc+1}$ and store it at the place which stores $b$ previously. (Since the address vectors $a$ and $b$ will not be used in the following steps, and we can overwrite them with any other address vectors.) By Lemma~\ref{lem:d_bit_addition}, we achieve this operation via a six-layer \textsf{ReLU}-\textsf{MLP}. Then, the state $x$ transforms as follows:
\begin{align*}
    x = 
    \begin{bmatrix}
        \begin{array}{cc}
            r_c \\
            \mem[b] - \mem[a] \\
            \mem[b] \\
            a \\
            \colorbox{lightblue}{$b$} \\
            c \\
            r_{pc} \\
            \hline
            v_1 \\
            v_2 \\
            \vdots \\
            v_k \\
            \hline 
            c_1 \\
            c_2 \\
            \vdots \\
            c_{m-1} \\
            c_{\EOF}
        \end{array}
    \end{bmatrix}
    \rightarrow
    \begin{bmatrix}
        \begin{array}{cc}
            r_c \\
            \mem[b] - \mem[a] \\
            \mem[b] \\
            a \\
            \colorbox{lightblue}{$r_{pc+1}$} \\
            c \\
            r_{pc} \\
            \hline
            v_1 \\
            v_2 \\
            \vdots \\
            v_k \\
            \hline 
            c_1 \\
            c_2 \\
            \vdots \\
            c_{m-1} \\
            c_{\EOF}
        \end{array}
    \end{bmatrix}
\end{align*}

This step requires a six-layer \textsf{ReLU}-\textsf{MLP}.

{\bf Step 6: Conditional branching.}

In this step, we perform the conditional branching operation according to $\mem[b] - \mem[a]$. For simplicity, let $\flag := \mem[b] - \mem[a]$. We use $r_{\target} \in \{ r_{pc+1}, c \}$ to denote the final address stored in the program counter $r_{pc}$. Namely, if $\flag = 1$, $r_{\target} = c$; if $\flag = -1$, $r_{\target} = r_{pc+1}$. 
By Lemma~\ref{lem:conditional_branching}, we achieve this operation via a four-layer \textsf{ReLU}-\textsf{MLP}. Then, the state $x$ transforms as follows:
\begin{align*}
    x = 
    \begin{bmatrix}
        \begin{array}{cc}
            r_c \\
            \flag \\
            \mem[b] \\
            a \\
            r_{pc+1} \\
            c \\
            \colorbox{lightblue}{$r_{pc}$} \\
            \hline
            v_1 \\
            v_2 \\
            \vdots \\
            v_k \\
            \hline 
            c_1 \\
            c_2 \\
            \vdots \\
            c_{m-1} \\
            c_{\EOF}
        \end{array}
    \end{bmatrix}
    \rightarrow
    \begin{bmatrix}
        \begin{array}{cc}
            r_c \\
            \flag \\
            \mem[b] \\
            a \\
            r_{pc+1} \\
            c \\
            \colorbox{lightblue}{$r_{\target}$} \\
            \hline
            v_1 \\
            v_2 \\
            \vdots \\
            v_k \\
            \hline 
            c_1 \\
            c_2 \\
            \vdots \\
            c_{m-1} \\
            c_{\EOF}
        \end{array}
    \end{bmatrix}
\end{align*}

This operation requires a four-layer \textsf{ReLU}-\textsf{MLP}. 

{\bf Step 7: Program termination.}

The special instruction $c_{\EOF}$ is used to signal the end of the program. Similar to other instructions, $c_{\EOF}$ also consists of three address vectors. 

To terminate the program, our idea is to make $r_{pc}$ always point to $c_{\EOF}$ after executing $c_{\EOF}$. For the three address vectors in $c_{\EOF}$, we set the $a = b$, and $c = \&(c_{\EOF})$, where $\&(c_{\EOF})$ denotes the address of the $c_{\EOF}$ instruction. This design is reasonable because setting $a = b$ brings $\mem[a] = \mem[b]$. Therefore, the condition always holds in the ``\SUBLEQ'' instruction. Hence, the program counter $r_{pc}$ will always jump to $c$. Since we have set $c$ to the address of $c_{\EOF}$, the $r_{pc}$ still points to $c_{\EOF}$ after execute $c_{\EOF}$.

To sum up, we have
\begin{itemize}
    \item {\bf Step 1: Read the three addresses} requires a two-layer \textsf{ReLU}-\textsf{MLP}.
    \item {\bf Step 2: Read the data required by the instruction} requires a two-layer \textsf{ReLU}-\textsf{MLP}.
    \item {\bf Step 3: Perform subtraction} requires a seven-layer \textsf{ReLU}-\textsf{MLP}.
    \item {\bf Step 4: Write back $\mem[b] - \mem[a]$} requires a two-layer \textsf{ReLU}-\textsf{MLP}.
    \item {\bf Step 5: Calculate $r_{pc+1}$} requires an six-layer \textsf{ReLU}-\textsf{MLP}.
    \item {\bf Step 6: Conditional branching} requires a four-layer \textsf{ReLU}-\textsf{MLP}.
\end{itemize}

Since we always operate the state vector $x \in \R^n$, then the width of the above-mentioned \ReLUMLP is $n$.

Therefore, to emulate the ``\SUBLEQ'' instruction, we need a twenty-three-layer \ReLUMLP with size $n \times n$.
\end{proof}

\section{LOOPED \textsc{ReLU-MLP} AS PROGRAMMABLE COMPUTER} \label{sec:app:looped_relu_mlp_as_programmable_computer}

Finally, in this section, we demonstrate the One Instruction Set Computer (OISC) constructed by the ``\SUBLEQ'' instruction is equivalent to the programmable computer in terms of computational ability, which further indicates that the $23$-layer \ReLUMLP discussed in the previous section is capable of functioning as a programmable computer.

\begin{theorem} [Looped \ReLUMLP as Programmable Computer, Formal Version of Theorem~\ref{thm:looped_relu_mlp_as_programmable_computer:informal}] \label{thm:looped_relu_mlp_as_programmable_computer}
If the following conditions hold:
\begin{itemize}
    \item Let \ReLUMLP be defined as Definition~\ref{def:relu_mlp}.
    \item Let $n$ denote the size of the state vector. 
    \item Let $m$ denote the number of instructions. 
    \item Let $k$ denote the number of one-bit data stored in the memory. For $i \in [k]$, each data is $v_i \in \{ \pm 1 \}$ and the memory size $k$ satisfies $k = n - 2 - 4 \log(n) - 3 m \log(n)$.
    \item Let the address vector $a_i \in \{\pm 1\}^{\log(n)}$.
    \item Let the instruction $c_i \in \{ \pm 1 \}^{3 \log (n)}$ be defined as Definition~\ref{def:instruction}.  
    
    \item Suppose we have two data registers $r_{d_1},  r_{d_2} \in \{\pm 1 \}$, one carry bit $r_c \in \{\pm 1 \}$, three address registers $r_{a_1}, r_{a_2}, r_{a_3} \in \{ \pm 1 \}^{\log (n)}$, and one program counter $r_{pc} \in \{ \pm 1 \}^{\log (n)}$ in the scratchpad. 
\end{itemize}

Then, we can show that a $23$-layer \ReLUMLP with width $n$ can emulate a programmable computer, where $d$ is the number of bits we use to store each integer. Namely, this ``computer'' supports integers within the range $[-2^{d-1}, 2^{d-1} - 1]$. 
\end{theorem}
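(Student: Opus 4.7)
The plan is to derive this theorem as a direct corollary of Lemma~\ref{lem:subleq} combined with the classical universality result for \SUBLEQ~\citep{mp88}. First I would invoke Lemma~\ref{lem:subleq} to obtain a fixed $23$-layer \ReLUMLP $M$ of width $n$ that, when applied once to any state vector $x$ organized as in Definition~\ref{def:state_vector}, produces a new state vector whose scratchpad, memory, and instruction blocks are exactly what one pass of the \SUBLEQ~semantics (Algorithm~\ref{alg:subleq}) would yield: namely, $\mem[b]$ has been updated to $\mem[b]-\mem[a]$, while $r_{pc}$ has been replaced by $c$ if $\mem[b]-\mem[a]\le 0$ and by $r_{pc+1}$ otherwise, with all other fields restored to their original values. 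This is the core expressiveness statement; everything else is organizational.

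Next I would iterate: looping $M$ corresponds to repeated fetch–decode–execute cycles, where each iteration reads the next instruction pointed to by the updated $r_{pc}$, so the successive outputs $M(x), M^2(x), M^3(x),\ldots$ trace exactly the execution of the program encoded by $c_1,\ldots,c_{m-1},c_{\EOF}$. For termination I would explicitly verify the halting gadget described in Step~7 of the sketch of Lemma~\ref{lem:subleq}: setting $c_{\EOF}=[a,a,\&(c_{\EOF})]$ forces $\mem[a]-\mem[a]=0$, which is always $\le 0$, so the conditional branch always jumps to $\&(c_{\EOF})$; thus once $r_{pc}$ reaches $c_{\EOF}$ it is a fixed point of $M$, faithfully modeling program termination. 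For the $d$-bit extension I would note that Lemma~\ref{lem:subleq} already invokes the $d$-bit read/write (Lemmas~\ref{lem:read_d_bits},~\ref{lem:write_d_bits}) and $d$-bit addition/subtraction (Lemmas~\ref{lem:d_bit_addition},~\ref{lem:subtraction}) routines, so that integers in the two's-complement range $[-2^{d-1},2^{d-1}-1]$ are handled with no additional layers—only additional inner loops within the same $23$-layer block.

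Finally, to upgrade from "emulates \SUBLEQ" to "is a programmable computer," I would cite the \SUBLEQ~OISC universality theorem of \citep{mp88,subleq}: any computable function realizable by a bounded-memory programmable computer can be compiled into a finite \SUBLEQ~program over sufficiently large address space. Combined with the previous paragraph, this yields that for any target program there is a choice of instructions $c_1,\ldots,c_{m-1}$ (with $c_m=c_{\EOF}$) and an initial memory such that the looped $23$-layer \ReLUMLP faithfully simulates the program up to termination, completing the proof.

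\textbf{Main obstacle.} The substantive work is already absorbed into Lemma~\ref{lem:subleq}; the only conceptual care required here is to confirm that the state-vector layout from Definition~\ref{def:state_vector} is preserved exactly by a single application of $M$, so that looping is well-defined and the program counter semantics are unambiguous across iterations. The remaining subtlety is the width accounting: one must check that $k=n-2-4\log(n)-3m\log(n)$ leaves enough room for scratchpad, memory, and the $m$ instructions simultaneously, so that all constructions in Lemmas~\ref{lem:read_d_bits}--\ref{lem:conditional_branching} apply within a single width-$n$ layer without overflow.
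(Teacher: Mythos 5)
Your proposal follows essentially the same route as the paper: invoke Lemma~\ref{lem:subleq} for a single \SUBLEQ{} pass of the $23$-layer block, argue that looping it realizes fetch--decode--execute with the $c_{\EOF}$ fixed-point gadget for termination, and then appeal to the \citet{mp88} universality of the \SUBLEQ{} OISC. The only (cosmetic) divergence is in the $d$-bit bookkeeping: the paper's proof treats Lemma~\ref{lem:subleq} as the one-bit case and adds an explicit ``loop the $23$-layer block $d$ times'' step, whereas you fold the $d$-bit handling into the inner read/write/arithmetic loops already cited inside that lemma --- both readings are consistent with the construction and neither changes the layer count.
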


\begin{proof}
In Lemma~\ref{lem:subleq}, we have proved that a $23$-layer \ReLUMLP is capable of emulating the one-bit version of ``\SUBLEQ'' instruction. 

{\bf Step 1: Extend to $d$-bits ``\SUBLEQ''.}

We demonstrate how the $1$-bit SUBLEQ instruction can be extended to a $d$-bit SUBLEQ instruction. 
Similar to the proof of Lemma~\ref{lem:read_d_bits} and \ref{lem:write_d_bits}, we can extend horizontally from the one-bit version to $d$-bits version, where we apply the $23$-layer \ReLUMLP row by row, with totally $d$ loops. 

To be more specific, each operation on the $d$-bit data can be decomposed into $d$ individual operations on $1$-bit data. Our design for the $1$-bit operations takes this into account. Specifically, the $1$-bit addition operation (Lemma~\ref{lem:one_bit_addition:informal}) also outputs a carry bit, indicating whether the operation results in a carry. By simply stacking the $1$-bit operations $d$ times, we can effectively support the entire $d$-bit operation.

{\bf Step 2: Extend to OISC.}

By \citet{mp88}, the One Instruction Set Computer (OISC) with the instruction ``\SUBLEQ'' is Turing complete, which means it can compute arbitrary programs. Therefore, we can conclude that our looped $23$-layer \ReLUMLP can actually function as a programmable computer. 
 
\end{proof}

Note that \citet{mp88} shows that anything a programmable computer can do can also be accomplished by stacking ``\SUBLEQ'' instructions. This means that, within our framework, any computation performed by a programmable computer can also be executed by looping our 23-layer ReLU-MLP with enough iterations.




\end{document}